\documentclass{article}

\usepackage{microtype}
\usepackage{graphicx}
\usepackage{subfigure}
\usepackage{booktabs} %

\usepackage{hyperref}
\usepackage{notation}
\usepackage{multicol}
\usepackage{verbatim}
\usepackage{paralist}
\usepackage{enumerate}
\usepackage{multirow}
\usepackage{placeins}

\usepackage{tikz}
\usetikzlibrary{arrows.meta}
\usetikzlibrary{positioning, shapes.geometric, fit, backgrounds, calc}%

\usepackage[accepted]{icml2025}

\usepackage{amsmath}
\usepackage{amssymb}
\usepackage{mathtools}
\usepackage{amsthm}

\usepackage[capitalize,noabbrev]{cleveref}
\crefname{assumption}{Assumption}{Assumptions}
\Crefname{assumption}{Assumption}{Assumptions}

\theoremstyle{plain}
\newtheorem{theorem}{Theorem}
\newtheorem{proposition}{Proposition}
\newtheorem{lemma}{Lemma}
\newtheorem{corollary}{Corollary}
\theoremstyle{definition}
\newtheorem{definition}{Definition}
\newtheorem{assumption}{Assumption}
\theoremstyle{remark}

\usepackage[textsize=scriptsize]{todonotes}

\icmltitlerunning{Learning Joint Interventional Effects from Single-Variable Interventions in Additive Models}

\begin{document}

\twocolumn[
\icmltitle{Learning Joint Interventional Effects from Single-Variable Interventions\\ in Additive Models}

\begin{icmlauthorlist}
\icmlauthor{Armin Keki\'c}{mpi}
\icmlauthor{Sergio Hernan Garrido Mejia}{mpi,amazon}
\icmlauthor{Bernhard Schölkopf}{mpi,tai,ellis}
\end{icmlauthorlist}

\icmlaffiliation{mpi}{Max Planck Institute for Intelligent Systems, Tübingen, Germany}
\icmlaffiliation{amazon}{Amazon Research, Tübingen, Germany}
\icmlaffiliation{tai}{Tübingen AI Center, Tübingen, Germany}
\icmlaffiliation{ellis}{ELLIS Institute, Tübingen, Germany}

\icmlcorrespondingauthor{Armin Keki\'c}{armin.kekic@tue.mpg.de}

\icmlkeywords{Machine Learning, ICML}

\vskip 0.3in
]

\printAffiliationsAndNotice{}  %

\begin{abstract}
    Estimating causal effects of joint interventions on multiple variables is crucial in many domains, but obtaining data from such simultaneous interventions can be challenging.
    Our study explores how to learn joint interventional effects using only observational data and single-variable interventions.
    We present an identifiability result for this problem, showing that for a class of nonlinear additive outcome mechanisms, joint effects can be inferred without access to joint interventional data.
    We propose a practical estimator that decomposes the causal effect into confounded and unconfounded contributions for each intervention variable.
    Experiments on synthetic data demonstrate that our method achieves performance comparable to models trained directly on joint interventional data, outperforming a purely observational estimator.
\end{abstract}

\section{Introduction}
\label{sec:introduction}
Understanding the effects of interventions is fundamental across many domains, from designing public health policies to optimizing business operations and administering medical treatments.
Particularly challenging and important are joint interventional effects, where simultaneous interventions on multiple action variables influence a target outcome.
Such scenarios are common in fields like epidemiology~\citep{kekic2023evaluating}, e-commerce~\citep{kunz2023deep,schultz2023causal}, and medicine~\citep{prosperi2020causal}.
\par
Consider a company trying to optimize their marketing strategy across multiple channels like social media, email campaigns, and display advertising.
While the ultimate goal is to understand how these channels work together to drive sales, running experiments for every possible combination of marketing interventions would be prohibitively expensive and time-consuming, as the number of required test conditions grows exponentially with each additional channel.
This raises a crucial question: \textit{Can we learn about joint effects from simpler experiments?}
\par
We investigate whether joint interventional effects can be estimated using only observational data and experiments where we intervene on one variable at a time.
This is an instance of the \emph{Intervention Generalization Problem}~\citep{bravo2023intervention}: predicting treatment effects in previously unseen interventional settings.
Causal models encode additional structural relationships between variables that allow us to generalize to settings in which non-causal machine learning approaches assuming independent, identically distributed (i.i.d.) data fail.
\par
\looseness-1
However, this problem is not solvable in its most general form. To achieve Intervention Generalization, we need to restrict the causal model class~\citep{saengkyongam2020learning}. 
The key question becomes: \textit{What model class restrictions enable this generalization while preserving broad applicability?}
\begin{figure*}[htb]
    \centering
    \resizebox{6.75in}{!}{
\begin{tikzpicture}[
    node distance = 0.8cm and 1.6cm,
    every node/.style = {draw=none, fill=none},
    arrow/.style = {->, >=stealth},
    dots/.style = {font=\bfseries\Large},
    intervened/.style = {draw, rectangle, minimum size=5mm, inner sep=1pt},
    box/.style = {draw, rounded corners, inner sep=10pt, line width=2.5pt, color=gray},
    bigbox/.style = {draw, rounded corners, thick, line width=2.5pt, inner xsep=7.5pt, inner ysep=7.5pt},
    graph_background/.style = {rectangle, rounded corners, fill=black, opacity=0.1, inner sep=1pt}
]

\definecolor{obscolor}{HTML}{1b9e77}
\definecolor{sintcolor}{HTML}{e7298a}
\definecolor{jintcolor}{HTML}{377eb8}

\begin{scope}[local bounding box=obs]
    \node (Y) at (0.2,0) {$Y$};
    
    \node (A1) at (-1,-1) {$A_{1}$};
    \node (A2) at (0,-1) {$A_{2}$};
    \node[dots] (Adots) at (0.7,-1) {\textbf{$\cdots$}};
    \node (AK) at (1.4,-1) {$A_{K}$};
    
    \node (C1) at (-1,-2) {$C_{1}$};
    \node (C2) at (0,-2) {$C_{2}$};
    \node[dots] (Cdots) at (0.7,-2) {\textbf{$\cdots$}};
    \node (CK) at (1.4,-2) {$C_{K}$};
    
    \foreach \i in {1,2,K} {
        \draw[arrow] (A\i) -- (Y);
        \draw[arrow] (C\i) -- (A\i);
    }
    \draw[arrow] (C1) to[bend left=7.5] (Y);
    \draw[arrow] (C2) to[bend right=20] (Y);
    \draw[arrow] (CK) to[bend left=5] (Y);
    \draw[arrow] (A1) to (A2);
    \draw[arrow] (A1) to[bend left=25] (AK);
    \draw[arrow] (A2) to[bend right=25] (AK);
    
    \node[graph_background, fill=obscolor, opacity=0.1, fit={(Y) (A1) (AK) (C1) (CK)}] {};
    
    \node (Pobs) at (0.2,-2.8) {\Large $\Prm^\Mcal$};
    
    \node[box, fit=(Y) (A1) (AK) (C1) (CK) (Pobs), color=obscolor] (obsbox) {};
    \node[below=4pt of obsbox.south] {\color{obscolor} \Large\bfseries observational};
\end{scope}

\begin{scope}[local bounding box=int, xshift=4cm]
    \begin{scope}[local bounding box=int1]
        \node (Y1) at (0.2,0) {$Y$};
        \node[intervened] (A11) at (-1,-1) {$A_{1}$};
        \node (A21) at (0,-1) {$A_{2}$};
        \node[dots] (Adots1) at (0.7,-1) {\textbf{$\cdots$}};
        \node (AK1) at (1.4,-1) {$A_{K}$};
        \node (C11) at (-1,-2) {$C_{1}$};
        \node (C21) at (0,-2) {$C_{2}$};
        \node[dots] (Cdots1) at (0.7,-2) {\textbf{$\cdots$}};
        \node (CK1) at (1.4,-2) {$C_{K}$};
        
        \foreach \i in {1,2,K} {
            \draw[arrow] (A\i1) -- (Y1);
        }
        \foreach \i in {2,K} {
            \draw[arrow] (C\i1) -- (A\i1);
        }
        \draw[arrow] (C11) to[bend left=7.5] (Y1);
        \draw[arrow] (C21) to[bend right=20] (Y1);
        \draw[arrow] (CK1) to[bend left=5] (Y1);
        \draw[arrow] (A11) to[bend left=25] (AK1);
        \draw[arrow] (A11) to (A21);
        \draw[arrow] (A21) to[bend right=25] (AK1);
        
        \node[graph_background, fill=sintcolor, opacity=0.1, fit={(Y1) (A11) (AK1) (C11) (CK1)}] {};

        \node (Psint1) at (0.2,-2.8) {\Large $\Prm^{\Mcal(\ddo(A_1))}$};
    \end{scope}
    
    \begin{scope}[local bounding box=int2, xshift=4cm]
        \node (Y2) at (0.2,0) {$Y$};
        \node (A12) at (-1,-1) {$A_{1}$};
        \node (A22) at (0,-1) {$A_{2}$};
        \node[dots] (Adots2) at (0.7,-1) {\textbf{$\cdots$}};
        \node[intervened] (AK2) at (1.4,-1) {$A_{K}$};
        \node (C12) at (-1,-2) {$C_{1}$};
        \node (C22) at (0,-2) {$C_{2}$};
        \node[dots] (Cdots2) at (0.7,-2) {\textbf{$\cdots$}};
        \node (CK2) at (1.4,-2) {$C_{K}$};
        
        \foreach \i in {1,2,K} {
            \draw[arrow] (A\i2) -- (Y2);
        }
        \foreach \i in {1,2} {
            \draw[arrow] (C\i2) -- (A\i2);
        }
        \draw[arrow] (C12) to[bend left=7.5] (Y2);
        \draw[arrow] (C22) to[bend right=20] (Y2);
        \draw[arrow] (CK2) to (Y2);
        \draw[arrow] (A12) to (A22);
        
        \node[graph_background, fill=sintcolor, opacity=0.1, fit={(Y2) (A12) (AK2) (C12) (CK2)}] {};

        \node (PsintK) at (0.2,-2.8) {\Large $\Prm^{\Mcal(\ddo(A_K))}$};
    \end{scope}
    
    \node[dots] (middledots) at ($(int1.east)!0.5!(int2.west)$) {\textbf{$\cdots$}};
    \node[box, fit=(int1) (int2) (middledots), color=sintcolor] (intbox) {};
    \newlength{\interventionswidth}
    \settowidth{\interventionswidth}{\Large\bfseries interventions}
    \node[below=4pt of intbox.south, text width=1.3in, align=center] {%
      \color{sintcolor}\Large\bfseries
      single\\ interventions
    };
\end{scope}

\begin{scope}[local bounding box=joint, xshift=12.75cm]
    \node (Y3) at (0.2,0) {$Y$};
    
    \node[intervened] (A13) at (-1,-1) {$A_{1}$};
    \node[intervened] (A23) at (0,-1) {$A_{2}$};
    \node[dots] (Adots3) at (0.7,-1) {\textbf{$\cdots$}};
    \node[intervened] (AK3) at (1.4,-1) {$A_{K}$};
    
    \node (C13) at (-1,-2) {$C_{1}$};
    \node (C23) at (0,-2) {$C_{2}$};
    \node[dots] (Cdots3) at (0.7,-2) {\textbf{$\cdots$}};
    \node (CK3) at (1.4,-2) {$C_{K}$};
    
    \foreach \i in {1,2,K} {
        \draw[arrow] (A\i3) -- (Y3);
    }
    
    \draw[arrow] (C13) to[bend left=7.5] (Y3);
    \draw[arrow] (C23) to[bend right=20] (Y3);
    \draw[arrow] (CK3) to (Y3);
    
    \node[graph_background, fill=jintcolor, opacity=0.1, fit={(Y3) (A13) (AK3) (C13) (CK3)}] {};

    \node (PsintK) at (0.2,-2.8) {\Large $\Prm^{\Mcal(\ddo(A_1, \dots, A_K))}$};
    
    \node[box, fit=(Y3) (A13) (AK3) (C13) (CK3) (PsintK), color=jintcolor] (jointbox) {};
    \node[below=4pt of jointbox.south, text width=1.3in, align=center] {%
      \color{jintcolor}\Large\bfseries
      joint\\interventions
    };
\end{scope}

\definecolor{trainingcolor}{HTML}{000000}
\node[bigbox, color=gray] (training) [fit=(obs) (int)] {};
\node[below=4pt of training.south] {\color{trainingcolor} \Large\bfseries training};

\definecolor{predictioncolor}{HTML}{000000}
\node[bigbox, color=gray] (prediction) [fit=(joint)] {};
\node[below=4pt of prediction.south] {\color{predictioncolor} \Large\bfseries prediction};

\end{tikzpicture}
}
    \caption{
    \textbf{The Intervention Generalization Problem.}
    The figure shows the different interventional regimes.
    Our goal is to estimate the joint interventional effect of the action variables $\{A_1, \dots, A_K\}$ on $Y$, that is, $\EE[Y \mid \ddo(a_1, \dots, a_K)]$ (right).
    However, during training we only have access to observational (left) and single-interventional data (middle).
    There are unobserved confounders $\{C_1, \dots, C_K\}$ between the actions and the outcome variable $Y$.
    A box around a variable indicates that it was intervened on.
    }
    \label{fig:intervention_generalization}
\end{figure*}
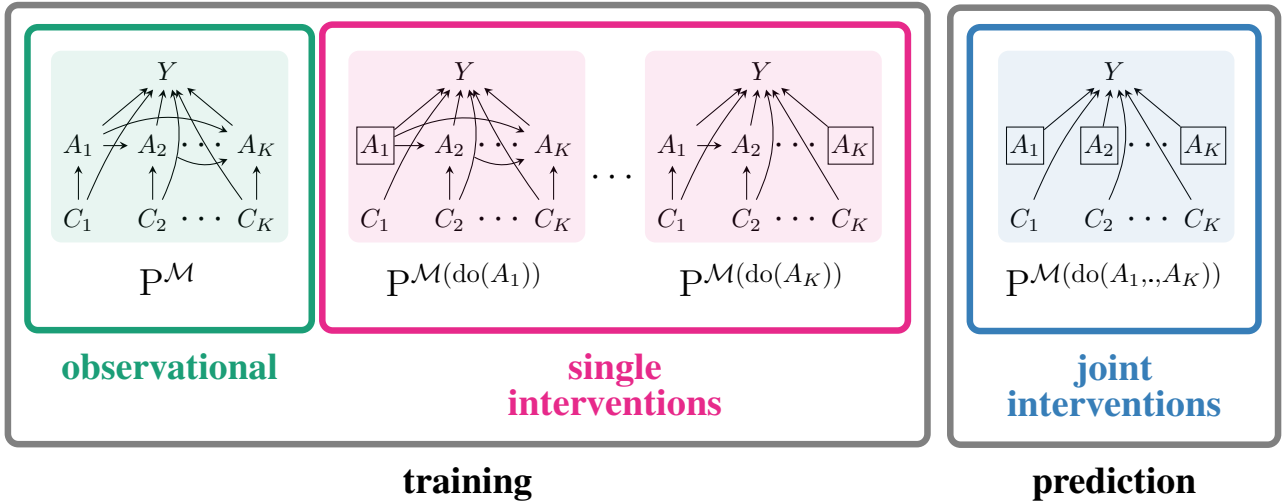
\par
In this study, we focus on causal models where each action contributes to the outcome variable in a nonlinear way and is subject to confounding.
We show that when these complex individual effects combine additively to produce the outcome, the joint interventional effect is identifiable from observational and single-intervention data.
This additivity assumption is well-motivated across several domains---for instance, in pharmacology where many drug combinations exhibit approximately additive effects in the absence of specific interaction mechanisms~\citep{pearson2023drug}, and in marketing analytics where additive models are widely used to understand how multiple advertising channels influence consumer behavior~\citep{jin2017bayesian,chan2017challenges}.
\par 
Our main contributions are:
\begin{enumerate}
    \item An identifiability result showing that joint interventional effects can be recovered from single-variable interventions and observational data in causal models with a nonlinear additive outcome mechanism
    \item A practical estimator that decomposes the causal effect into confounded and unconfounded contributions for each intervention variable
    \item Empirical validation demonstrating that our method achieves performance comparable to models trained directly on joint interventional data
\end{enumerate}

\section{Problem Statement}
\label{sec:problem_statement}
\subsection{Background and Notation}
\label{ssec:background_and_notation}

We use boldface for vector-valued or sets of random variables. We denote random variables with capital letters and realizations thereof in lowercase.
\begin{definition}[SCM~\citep{pearl2009causality,peters2017elements}]\label{def:SCM}
    An $N$-dimensional structural causal model is a triplet $\Mcal=(\mathcal{G},\mathbb{S},P_\Ub)$ consisting of:
    {\setlength{\leftmargini}{2em}
    \begin{itemize}
        \item a joint distribution $\Prm_\Ub$ over the jointly independent ``exogenous'' variables $\Ub=\{U_1, \dots, U_N\}$,%
        \item a directed acyclic graph $\mathcal{G}$ with $N$ vertices,
        \item a set $\mathbb{S}=\{X_j \coloneqq f_j(\textbf{Pa}_j,U_j), j=1,\dots,N\}$ of structural assignments, 
        where each $f_j$ is a scalar-valued function and $\textbf{Pa}_j$ are the variables indexed by the set of parents of node $j$ in $\mathcal{G}$,%
    \end{itemize}
    }
    such that for every $\ub$, the system $\{x_j \coloneqq f_j(\textbf{pa}_j,u_j)\}$ has a unique solution.
    The SCM thus entails a joint distribution over the ``endogenous'' random variables $\Xb=\{X_1, \dots, X_N\}$.
\end{definition}
\paragraph{Interventions in SCMs.}
Interventions on endogenous variables in SCMs are represented by the $\ddo(\cdot)$-operator and modify how variables are assigned their values.
A \textit{perfect stochastic intervention} $\ddo(X_i)$ removes all incoming edges to the intervened variable and replaces its structural assignment with a draw from an intervention distribution over $X_i$.
That is, we replace the structural assignment $X_k \coloneqq f_k(\textbf{Pa}_k,U_k)$ by $X_k \coloneqq \tilde U_k$, where $\tilde U_k$ is a univariate distribution independent of the parents $\textbf{Pa}_k$.
We use lowercase letters to denote specific realizations of these interventions, writing $\ddo(x_i)$ to indicate that $X_i$ was set to the particular value $x_i$.
Such interventions transform the original model $\Mcal$ into an interventional model $\Mcal(\ddo(X_i))$ that induces a modified distribution over the endogenous variables, which we denote with a superscript $\Prm^{\Mcal(\ddo(X_i))}$.
When no superscript is present, the distribution comes from the unintervened (observational) model $\Mcal$.
In conditioning sets, we write $\{x_1, \dots, \ddo(x_i), \dots, x_n\}$ to indicate that $X_i$ was intervened on while other variables have observational realizations.

\subsection{Setting}
\label{ssec:setting}

Let $\Ab = \{A_1, \dots, A_K\}$ be a set of \textit{treatment} or \textit{action} variables, $\Cb = \{C_1, \dots, C_K\}$ be a set of \textit{unobserved confounders}, and $Y$ be an \textit{outcome} variable.
The actions are direct causes of the outcome, and we allow for an arbitrary acyclic causal structure among the actions.
For notational simplicity, we assume the actions are in topological order and write the causal structure as a fully connected DAG.\footnote{This means that each action variable $A_k$ can potentially depend on all preceding actions $\{A_1, \dots, A_{k-1}\}$ and confounders $\{C_1, \dots, C_{k-1}\}$.}
\par
We can write the structural assignments as follows:
\begin{align}
    Y &\coloneqq f(A_1, \dots, A_{K}, C_1, \dots, C_K, U) & \label{eq:outcome_mech}\\
    A_k &\coloneqq g_k(A_1, \dots, A_{k-1}, C_1, \dots, C_{k-1}, V_{k})\\
    C_k &\coloneqq W_{k} \\
     \text{for} & \ k\in \{1, \dots, K\} \nonumber
\end{align}
where $\{ U, V_{1}, \dots , V_{K}, W_{1}, \dots , W_{K} \}$ are mutually independent exogenous noise variables.
\par 
We are given a dataset of observational i.i.d.\ samples
\begin{equation}
    \mathrm{D}_\mathrm{obs}\sim \mathrm{P}^\Mcal_{(Y, \Ab)}
    \label{eq:observational_data}
\end{equation}
and $K$ datasets of i.i.d.\ samples for single-variable interventions on each action variable
\begin{equation}
    \{\mathrm{D}^k_\mathrm{int}\sim \Prm^{\Mcal(\ddo(A_k))}_{(Y, \Ab)}\}_{k=1}^K \,.
    \label{eq:single_int_data}
\end{equation}
Here, each interventional dataset $\mathrm{D}^k_\mathrm{int}$ consists of samples generated under perfect stochastic interventions on $A_k$.
\par
Our objective is to estimate the \emph{joint interventional effect}
\begin{equation}
    \EE\left[ Y \mid \ddo(a_1, \dots, a_K) \right] \,.
    \label{eq:joint_interventional_effect}
\end{equation}

\paragraph{Identifiability.}
For a given class of causal models, we say that a causal effect is identifiable from a set of probability distributions if it can be uniquely determined from these distributions.
In our setting, we say that the joint interventional effect~\eqref{eq:joint_interventional_effect} is identifiable from observational and single-intervention distributions if it can be uniquely determined from $\Prm^{\Mcal}_{(Y, \Ab)}$ and $\{\Prm^{\Mcal(\ddo(A_k))}_{(Y, \Ab)}\}_{k=1}^K$. Note that this is equivalent to saying that the joint interventional effect~\eqref{eq:joint_interventional_effect} is identifiable from infinite samples from $\Prm^{\Mcal}_{(Y, \Ab)}$ and $\{\Prm^{\Mcal(\ddo(A_k))}_{(Y, \Ab)}\}_{k=1}^K$, \textit{i.e.}\ when the datasets $\mathrm{D}_\mathrm{obs}$ and $\{\mathrm{D}^k_\mathrm{int}\}_{k=1}^K$ are infinitely large.
\par
Conversely, the joint interventional effect is not identifiable from single-interventional and observational distributions within a given model class if there exist two distinct causal models that entail the same single-interventional and observational distributions but different joint-interventional distributions.

\section{Related Work}
\label{sec:related_work}

Understanding causal effects from data with limited experimental control is a fundamental challenge in causal inference.
There is a rich literature that studies such problems in the \textit{nonparametric setting}, where only knowledge about the causal structure and probability distributions in different interventional settings are given.
Such methods are agnostic to the functional form of the causal mechanisms and the distribution of the exogenous noise variables.
The foundational work in this area focuses on identifying interventional effects from purely observational data \citep{tian2002general,shpitser2006identification}.
The multi-treatment setting, where effects of multiple simultaneous treatments must be estimated from purely observational data, presents additional challenges \citep{miao2023identifying,zheng2025copula}.
This has been generalized through \textit{g-identification} theory \citep{lee2020general}, which determines whether target interventional effects can be recovered from a given combination of observational and experimental data.
However, as we show in \cref{sec:general_non_id}, nonparametric approaches are insufficient for our setting.
This motivates our focus on parametric assumptions to identify the joint interventional effect~\eqref{eq:joint_interventional_effect}.
\par
\looseness-1
\citet{bravo2023intervention} present a \textit{factor graph} approach for the Intervention Generalization Problem, investigating identifiability of joint interventional effects given specific factorizations of observational and interventional probability distributions.
Similarly, \citet{jung2023estimating} present graphical conditions for nonparametric identification of joint interventional effects---which they term \textit{Multiple Treatment Interactions}---and employ double machine learning techniques for estimation from marginal interventional data.
Related approaches study causal effects in novel action-context pairs when actions and contexts are categorical variables \cite{ribot2024causal} or and treatment effect estimation for unobserved subgroups in causal mixture models \cite{mazaheri2024synthetic}.
\par
The most closely related prior work is that of \citet{saengkyongam2020learning}, who employ parametric assumptions on the exogenous noise variables.
They show that generalization from single-intervention and observational data to the joint interventional effect is possible for continuous variables when the exogenous noise is Gaussian and additive.
In contrast, we consider parametric assumptions on the causal mechanism~\eqref{eq:outcome_mech} that relates the action variables to the outcome variable, assuming that it is additive.
While this restricts the confounding between the actions, this allows us to treat any distributions for the exogenous noise as well as both discrete and continuous variables---see \Cref{app:variable_types}.
\par
\textit{Additive models} have a long history in statistics and machine learning.
They were first introduced for regression \citep{friedman1981projection} and later extended to \textit{Generalized Additive Models (GAMs)} \citep{breiman1985estimating,hastie1990generalized}, where the additive contributions
from each input variable are transformed through a nonlinear link function.
More recently, \textit{Neural Additive Models} have combined the interpretability benefits of GAMs with the flexibility of neural networks \citep{agarwal2021neural}.
The concept of additivity has also enabled novel forms of generalization---\citet{lachapelle2024additive} showed that nonlinear additive ground-truth decoders allow generalization to unseen combinations of latent factors, which they termed \emph{Cartesian-Product Extrapolation}.
While additive noise assumptions are common in causal inference \citep{NIPS2008_f7664060}, this is, to the best of our knowledge, the first work that investigates additivity of causal mechanisms with respect to the parent variables.
\par %
The complementary problem of generalizing from joint interventions to single intervention effects was studied by~\citet{jeunen2022disentangling} and \citet{elahi2024identification}, and generalization to unseen interventions without identifiability was explored for stationary diffusion models~\citep{pmlr-v238-lorch24a}.
\par
Intervention Generalization is akin to other types of generalization, which is aided by the structure encoded in causal models such as the \emph{Causal Marginal Problem}~\citep{gresele2022causal, garrido2022obtaining, garrido2024estimating}.
There, instead of generalizing to a new interventional regime, the goal is to learn about the joint behavior and causal structure of variables that have only been observed in subsets, but never jointly.
Another example is \emph{Out-of-Variable Generalization}~\citep{guo2024outofvariable}, where some variables have never been observed in training.
\par
A key aspect of our method is the ability of causal models to combine information from different data sets.
This aligns with \emph{Causal Representation Learning}, where many approaches use datasets or observation pairs that differ by interventions in the latent variables~\citep{wendong2023causal,brehmer2022weakly,zhang2024identifiability,kugelgen2023nonparametric,buchholz2024learning}, samples from scientific simulations~\citep{kekic2023evaluating,kekic2023targeted}, and multiple views and modalities~\citep{yao2024multi}.

\section{General Non-Identifiability}
\label{sec:general_non_id}

In general, the setting described in \Cref{ssec:setting} is not identifiable.
We can show that two distinct SCMs can induce identical observational distributions and single-variable interventional distributions, but exhibit different behaviors under joint interventions.
\vspace{\baselineskip}
\begin{example}
\label{ex:nonident}
Consider the following two SCMs over binary variables:
\vspace{-1.5\baselineskip}
\begin{multicols}{2}
\noindent
\vtop{
\begin{align*}
    \Mcal: \quad & \nonumber \\
    Y &\coloneqq A_1 \land A_2 \land C \land U \\
    A_2 &\coloneqq A_1 \land C \land V_2 \\
    A_1 &\coloneqq C \\
    C &\coloneqq W 
\end{align*}
}

\columnbreak

\vtop{
\begin{align*}
    \widetilde \Mcal: \quad & \nonumber \\
    Y &\coloneqq A_2 \land C \land U \\
    A_2 &\coloneqq A_1 \land C \land V_2 \\
    A_1 &\coloneqq C\\
    C &\coloneqq W 
\end{align*}
}
\end{multicols}
\vspace{-1\baselineskip}
where $U, V_2, W {\sim} \mathrm{Bernoulli}(p)$, with $0{<}p{<}1$.
These two models induce the same observational distribution over the observed variables; that is, $\Prm^\Mcal(Y, A_1, A_2) = \Prm^{\widetilde \Mcal}(Y, A_1, A_2)$.
They also lead to the same single-variable interventional distributions; $\Prm^{\Mcal(\ddo(A_1))}(Y, A_1, A_2) = \Prm^{\widetilde \Mcal(\ddo(A_1))}(Y, A_1, A_2)$ and $\Prm^{\Mcal(\ddo(A_2))}(Y, A_1, A_2) = \Prm^{\widetilde \Mcal(\ddo(A_2))}(Y, A_1, A_2)$.\footnote{These probability distributions are shown in \Cref{tab:example_obs,tab:example_single1,tab:example_single2} in \Cref{app:example_prob_dist}.}
However, they induce different distributions when $A_1$ and $A_2$ are jointly intervened:
\begin{align}
    & \Prm^{\Mcal(\ddo(A_1{=}0, A_2{=}1))}(Y{=}1) 
    = 0 \nonumber \\
    & \ne p^2 =
    \Prm^{\widetilde \Mcal(\ddo(A_1{=}0, A_2{=}1))}(Y{=}1) \,.
\end{align}
Hence, two estimators trained on observational and single-interventional data from $\Mcal$ and $\widetilde \Mcal$ would arrive at identical predictions for the joint interventional effect~\eqref{eq:joint_interventional_effect}, despite the true effects being different in these two models.
\end{example}

\section{Assumptions}
\label{ssec:assumptions}

In the previous section, we demonstrated that joint interventional effects cannot be identified from single-variable interventions in general.
However, by imposing certain restrictions on the causal model class, we can achieve identifiability. %
In this section, we introduce two key assumptions that together enable the identification of joint interventional effects from single-variable interventions.

\begin{assumption}[Intervention Support]
    \label{ass:intervention_support}
    The distributions of the action variables have identical support across all interventional regimes. That is,\footnote{We denote the support of random variable $\Ab$ under the probability distribution $\Prm^{\Mcal}_{(\Ab)}$ as $\mathrm{supp}_{\Prm^{\Mcal}_{(\Ab)}}(\Ab)$. Where $\Mcal$ is the corresponding causal model; in this case, the SCM for the observational regime.}
    \begin{align}
        &\mathrm{supp}_{\Prm^{\Mcal}_{(\Ab)}}(\Ab)
        = \mathrm{supp}_{\Prm^{\Mcal(\ddo(A_1, \dots, A_K))}_{(\Ab)}}(\Ab) \nonumber \\
        &= \mathrm{supp}_{\Prm^{\Mcal(\ddo(A_k))}_{(\Ab)}}(\Ab)
        \quad \text{for any } k\in \{1, \dots, K\} \,.
    \end{align}
\end{assumption}

\begin{assumption}[Additive Outcome Mechanism]\looseness-1
    \label{ass:additive_outcome_mechanism}
    There is pair-wise confounding between the actions and the outcome.
    The outcome is generated by an additive combination of separate nonlinear functions for each action and its associated confounder.
    The structural assignments can be written as:
    \begin{align}
        Y &\coloneqq \sum_{k=1}^K f_k(A_k, C_k) + U \label{eq:additive_outcome_mech} &\\
        A_k &\coloneqq g_k(A_1, \dots, A_{k-1}, C_k, V_{k}) \\
        C_k &\coloneqq W_{k} \\
     \text{for} & \ k\in \{1, \dots, K\} \nonumber
    \end{align}
    where $\{ U, V_{1}, \dots , V_{K}, W_{1}, \dots , W_{K} \}$ are mutually independent exogenous noise variables.
\end{assumption}
\par
\Cref{ass:intervention_support} is needed for technical reasons in the identifiability proof.
In order to identify the joint effect~\eqref{eq:joint_interventional_effect}, we need to transfer information from the other interventional regimes~\eqref{eq:observational_data} and \eqref{eq:single_int_data} to the joint case~\eqref{eq:joint_interventional_effect}.
\Cref{ass:intervention_support} ensures that the actions cover the same range for the inputs of the outcome mechanism~\eqref{eq:outcome_mech}.
Such assumptions on the support of variables are common in multi-dataset settings in Causal Representation Learning~\citep{varici2024general} and can be seen as a version of the positivity assumption in causal effect estimation~\citep[Chapter 3.3]{hernan2010causal}.
In practice, identical intervention support can be ensured through choosing the appropriate experimental settings.
When this is not possible, support mismatches can be mitigated by exploiting domain knowledge about the function classes in the structural assignments~\citep{kekic2023evaluating}; in our case $\{f_k\}_{k=1}^K$.
\par
\Cref{ass:additive_outcome_mechanism} restricts the causal model class regarding how the actions influence the outcome.
While the effects of each action $A_k$ on the outcome $Y$ can be nonlinear and complex, we limit how the actions and confounders interact in the outcome mechanism~\eqref{eq:outcome_mech}.
We assume that both the contributions of each action-confounder pair $(A_k, C_k)$ and the exogenous noise $U$ are additive.
Note that for the other structural assignments we do not introduce such constraints for the exogenous noise or functional form.
The resulting causal structure is illustrated in \Cref{fig:intervention_generalization} (left).

\section{Theoretical Results}
\label{sec:theoretical_results}

\subsection{Identifiability of Joint Interventional Effect}
In this section, we show that in the causal model class with a nonlinear additive outcome mechanism, outlined in \Cref{ssec:assumptions}, we can achieve Intervention Generalization.
\begin{theorem}[Identifiability]
    \label{thm:id}
    Under the assumptions in Section~\ref{ssec:assumptions}, the joint interventional effect~\eqref{eq:joint_interventional_effect} is identifiable from single-variable interventions and observational data in the infinite data regime.
\end{theorem}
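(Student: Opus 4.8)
The plan is to reduce the joint effect~\eqref{eq:joint_interventional_effect} to a fixed linear combination of conditional expectations of $Y$ that can each be read off either from the observational distribution $\Prm^{\Mcal}_{(Y,\Ab)}$ or from one of the single-intervention distributions $\Prm^{\Mcal(\ddo(A_k))}_{(Y,\Ab)}$. Fix a target tuple $\mathbf{a}=(a_1,\dots,a_K)$, and write $\mathbf{a}_{-k}$ for $\mathbf{a}$ with its $k$-th entry deleted and $\Ab_{-k}$ for the corresponding actions. By \Cref{ass:intervention_support} the point $\mathbf{a}$ lies in the support of $\Ab$ in every regime, which is what makes all conditional expectations below well defined (and, in the single-intervention regimes, independent of the particular stochastic intervention used, since we condition on $A_k=a_k$). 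The first step is to use the additive structure~\eqref{eq:additive_outcome_mech}: under the joint intervention every $A_k$ is set externally, so each $C_k=W_k$ keeps its observational marginal, giving
\[
  \EE\!\left[Y\mid\ddo(a_1,\dots,a_K)\right]=\sum_{k=1}^{K}h_k(a_k)+c,\qquad h_k(a_k)\coloneqq\EE\big[f_k(a_k,C_k)\big],\quad c\coloneqq\EE[U],
\]
with the expectation defining $h_k$ taken over the marginal of $C_k$. So it suffices to identify this sum.

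The heart of the argument is a ``regime-invariance'' lemma for the confounder posteriors. Writing $\psi_j(\mathbf{a})\coloneqq\EE\big[f_j(a_j,C_j)\mid\Ab=\mathbf{a}\big]$ (an observational quantity), I would note that the pairwise-confounding structure of \Cref{ass:additive_outcome_mechanism} makes the joint law of $(\Cb,\Ab)$ factorize so that each $C_j$ enters only through its own prior factor and through the structural factor for $A_j$. From this I would deduce two facts: (i) the conditional law of $C_j$ given $\Ab=\mathbf{a}$ is unchanged by a perfect intervention on any \emph{other} action $A_k$ ($k\neq j$), because such an intervention only deletes the structural factor of $A_k$, which never involves $C_j$; and (ii) a perfect intervention on $A_j$ itself collapses that conditional law to the prior of $C_j$. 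Together with additivity of~\eqref{eq:additive_outcome_mech} these give, for every $k$,
\[
  \EE\big[Y\mid\Ab=\mathbf{a}\big]=\sum_{j=1}^{K}\psi_j(\mathbf{a})+c,\qquad \EE\big[Y\mid\ddo(a_k),\,\Ab_{-k}=\mathbf{a}_{-k}\big]=h_k(a_k)+\sum_{j\neq k}\psi_j(\mathbf{a})+c.
\]

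It then remains to eliminate the unknown functions $\psi_j$: summing the second identity over $k$, substituting $\sum_j\psi_j(\mathbf{a})=\EE[Y\mid\Ab=\mathbf{a}]-c$ from the first, and comparing with the expression for the target yields the closed form
\[
  \EE\!\left[Y\mid\ddo(a_1,\dots,a_K)\right]=\sum_{k=1}^{K}\EE\!\left[Y\mid\ddo(a_k),\,\Ab_{-k}=\mathbf{a}_{-k}\right]-(K-1)\,\EE\!\left[Y\mid\Ab=\mathbf{a}\right].
\]
Each term on the right is a functional of $\Prm^{\Mcal}_{(Y,\Ab)}$ and $\{\Prm^{\Mcal(\ddo(A_k))}_{(Y,\Ab)}\}_{k=1}^{K}$, which proves identifiability. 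As a by-product this displays the contribution of $A_k$ as a ``confounded'' observational part $\psi_k(\mathbf{a})$ plus an ``unconfounded'' correction $\EE[Y\mid\ddo(a_k),\Ab_{-k}=\mathbf{a}_{-k}]-\EE[Y\mid\Ab=\mathbf{a}]$, which is precisely the decomposition underlying our estimator.

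I expect the main obstacle to be the regime-invariance lemma, that is, rigorously showing that intervening on one action leaves the conditional distribution of every \emph{other} confounder given all actions exactly invariant while severing the coupling of the intervened one; this needs a careful treatment of the post-interventional factorization that is valid for both discrete and continuous variables. It is also here that \Cref{ass:intervention_support} is indispensable: without a common support for $\Ab$ across regimes, the interventional conditional expectations above need not be defined at every target $\mathbf{a}$, and the cancellation of the $\psi_j$ terms breaks down.
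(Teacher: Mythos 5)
Your proposal is correct and follows essentially the same route as the paper: the same three regime-specific decompositions of $\EE[Y\mid\cdot]$, the same confounder-posterior invariance facts (the paper's Lemmas~1 and~2), and your closed form $\sum_{k}\EE[Y\mid\ddo(a_k),\mathbf{a}_{-k}]-(K-1)\,\EE[Y\mid\mathbf{a}]$ is exactly what the paper's indicator-based estimator $\hat{f}(a_1,\dots,a_K,R_1{=}1,\dots,R_K{=}1)$ evaluates to after fitting. The only difference is presentational---the paper packages this linear combination as a fitted additive estimator with regime indicators rather than writing it out explicitly---so the mathematical content coincides.
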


\begin{proofsketch}
    We first note that we can decompose the joint interventional effect~\eqref{eq:joint_interventional_effect}, which we want to estimate, as well as the conditional expectations, for which we have data, as
    \allowdisplaybreaks
    \begin{align}
        & \EE[Y\mid\ddo(a_1,\dots,a_K)] \nonumber \\* 
        &=  \sum_k\EE_{C_k\sim \prm(C_k)}\left[f_k(a_k,C_k)\right] \label{eq:joint_int_decomposition_sketch} \\
        & \EE[Y\mid a_1,\dots,\ddo(a_j),\dots,a_K] \nonumber \\* 
        &= \EE_{C_j\sim \prm(C_j)}[f_j(a_j,C_j)] \nonumber \\*
        & \quad +  \sum_{k\neq j}\EE_{C_k\sim \prm(C_k\mid a_1,\dots,a_K)}[f_k(a_k,C_k)] \nonumber \\*
        & \text{for } j\in\{1, \dots, K\} \label{eq:single_int_decomposition_sketch} \\
        & \EE[Y\mid a_1,\dots,a_K] \nonumber \\*
        &= \sum_k \EE_{C_k\sim \prm(C_k\mid a_1,\dots,a_K)}[f_k(a_k,C_k)] \,. \label{eq:obs_decomposition_sketch}
    \end{align}
    \looseness-1
    These decompositions correspond to the terms $f_k$ in the additive outcome mechanism~\eqref{eq:additive_outcome_mech}.
    In each term, the expectation over the confounder $C_k$ is taken with respect to a measure that depends on whether the corresponding action variable $A_k$ was intervened on.
    When $A_k$ is not intervened on, the confounding can introduce an additional dependence on the other action variables, entangling their influences.
    \par
    However, these decompositions still allow us to learn a representation that enables us to generalize from the observational and single-interventional setting to the joint-interventional effect.
    We define $K$ estimator functions 
    \begin{equation}
        \hat{f}_k (a_1, \dots, a_K, R_k), \qquad k\in\{1, \dots, K\} \, ,
    \end{equation}
    where $R_k\in\{0, 1\}$ indicates an intervention on $A_k$.
    Each $R_k$ can be thought of as selecting one of two functions
    \begin{equation}
        \hat{f}_k (a_1, \dots, a_K, R_k) =
        \begin{cases}
            \hat{f}_k^\mathrm{obs} (a_1, \dots, a_K) \text{, if } R_k{=}0\\
            \hat{f}_k^\mathrm{int} (a_1, \dots, a_K) \text{, if } R_k{=}1
        \end{cases}
    \end{equation}
    where $\hat{f}_k^\mathrm{int}$ represent the terms in the decompositions \eqref{eq:joint_int_decomposition_sketch}--\eqref{eq:obs_decomposition_sketch} where the corresponding action $A_k$ is intervened on, and $\hat{f}_k^\mathrm{obs}$ are the factors with $A_k$ observational.
    We then define an overall estimator
    \begin{equation}
        \hat{f}(a_1, \dots, a_K, R_1, \dots, R_K) = \sum_{k=1}^K \hat{f}_k (a_1, \dots, a_K, R_k)
        \label{eq:total_estimator_sketch}
    \end{equation}
    to represent all regimes, depending on the setting of the indicator variables $R_1, \dots, R_K$:
    {\setlength{\leftmargini}{2em}
    \begin{itemize}
        \item When $R_1{=}1, \dots, R_K{=}1$, the function $\hat{f}$ is an estimator for the joint interventional regime.
        \item $R_1{=}0, \dots, R_j{=}0, \dots , R_K{=}1$ corresponds to the single-intervention setting of $\Mcal(\ddo(a_j))$.
        \item $R_1{=}0, \dots, R_K{=}0$ is the observational setting.
    \end{itemize}
    }
    In the outcome mechanism~\eqref{eq:additive_outcome_mech}, each term $f_k$ depends only on one action $A_k$.\footnote{$f_k$ also depends on the confounder $C_k$.}
    In contrast, each model factor $\hat f_k$ has to take all actions into account due to the entanglement introduced through confounding.
    \par 
    Now if we fit the estimator $\hat f$ in the observational and the single-interventional regimes, that is,
    \begin{align}
        & \hat{f}(a_1, \dots, a_K, R_1{=}0, \dots, R_K{=}0) = \EE[Y\mid a_1, \dots, a_K] \label{eq:obs_fit_sketch} \\
        & \hat{f}(a_1, \dots, a_K, R_1{=}0, \dots, R_j{=}1, \dots, R_K{=}0) \nonumber \\ 
        & = \EE[Y\mid a_1, \dots, \ddo(a_j) , \dots, a_K], \quad \text{for } j\in\{1, \dots, K\} \label{eq:single_int_fit_sketch} \;,
    \end{align}
    we can show that the estimator also identifies the joint interventional effect:
    \begin{equation}
        \hat{f}(a_1, \dots, a_K, R_1{=}1, \dots, R_K{=}1) = \EE[Y\mid \ddo(a_1, \dots, a_K)] \,.
    \end{equation}
    The full proof is shown in \Cref{app:main_proof}.
\end{proofsketch}
\par
Note that, while our approach assumes that the action variables are direct causes of the outcome and that there is no confounding between the actions, we do not assume a particular causal structure between the actions.
The approach we present here is agnostic to causal relationships among the actions and does not use this information to infer the joint interventional effect~\eqref{eq:joint_interventional_effect}.

\subsection{Mixed Interventional Effects}

Moreover, we can extend our results to any combination of intervened and observational actions:
\begin{proposition}[Identifiability of Mixed Interventional Effects]
    \label{prop:any_interventional_id}
    Let $\Ab_\mathrm{int} \cup \Ab_\mathrm{obs} = \{A_1, \dots, A_K\}$ be a partition of the action variables into intervened and observational actions.
    Under the assumptions in Section~\ref{ssec:assumptions}, the effect
    \begin{equation}
        \EE[ Y\mid \ddo(\ab_\mathrm{int}), \ab_\mathrm{obs}]
        \label{eq:general_interventional_effect}
    \end{equation}
    is identifiable from single-variable interventions and observational data in the infinite data regime. 
    The proof is given in \Cref{app:any_interventional_id_proof}.
\end{proposition}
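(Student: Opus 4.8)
The plan is to build on the decomposition arguments behind \Cref{thm:id}. Write $\mathcal{I} = \{k : A_k \in \Ab_\mathrm{int}\}$ and $\mathcal{O} = \{k : A_k \in \Ab_\mathrm{obs}\}$. The first step is to establish, from the additive outcome mechanism~\eqref{eq:additive_outcome_mech} together with the pair-wise confounding in \Cref{ass:additive_outcome_mechanism}, the decomposition
\begin{align*}
    \EE[Y \mid \ddo(\ab_\mathrm{int}), \ab_\mathrm{obs}]
    ={}& \sum_{k \in \mathcal{I}} \EE_{C_k \sim \prm(C_k)}[f_k(a_k, C_k)] \\
    &+ \sum_{k \in \mathcal{O}} \EE_{C_k \sim \prm(C_k \mid a_1, \dots, a_K)}[f_k(a_k, C_k)],
\end{align*}
whose $\mathcal{I}$-sum consists of ``intervened'' terms exactly as in \eqref{eq:joint_int_decomposition_sketch} and whose $\mathcal{O}$-sum consists of ``observational'' terms exactly as in \eqref{eq:obs_decomposition_sketch}. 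So the mixed regime is, term by term, assembled from pieces the proof of \Cref{thm:id} already pins down.

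The key step---and the one I expect to be the main obstacle---is to verify that the law of each confounder under the mixed regime agrees with the law appearing in the regimes already handled. For $k \in \mathcal{I}$, intervening on $A_k$ deletes the edge $C_k \to A_k$; since pair-wise confounding makes $C_k$ a parent only of $A_k$ and $Y$, and $Y$ is not in the conditioning set, $C_k$ becomes independent of all conditioned variables and hence follows its marginal $\prm(C_k)$, which is exactly the measure in the ``intervened'' terms of \eqref{eq:joint_int_decomposition_sketch}. For $k \in \mathcal{O}$, the edge $C_k \to A_k$ is intact and $A_k = a_k$ is conditioned on; because each confounder $C_m$ appears only in the structural factor for $A_m$, marginalizing out $\{C_m\}_{m \neq k}$ and normalizing leaves $\prm(C_k \mid a_1, \dots, a_K) \propto \prm(C_k)\,\prm(a_k \mid a_1, \dots, a_{k-1}, C_k)$ in every regime in which $A_k$ is not intervened---the factors for the other actions (structural, or replaced by point masses under interventions) do not involve $C_k$ and cancel. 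This regime-invariance is precisely what breaks if a confounder is allowed to point into more than one action, which is why the pair-wise confounding of \Cref{ass:additive_outcome_mechanism} is essential; \Cref{ass:intervention_support} is then used so that the evaluation point $(\ab_\mathrm{int}, \ab_\mathrm{obs})$ and all conditional expectations involved are well-defined across regimes.

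Given the decomposition, the conclusion follows by substituting the single-intervention decomposition~\eqref{eq:single_int_decomposition_sketch} and the observational decomposition~\eqref{eq:obs_decomposition_sketch} and collecting terms, which yields the closed form
\begin{align*}
    \EE[Y \mid \ddo(\ab_\mathrm{int}), \ab_\mathrm{obs}]
    ={}& \sum_{k \in \mathcal{I}} \EE[Y \mid a_1, \dots, \ddo(a_k), \dots, a_K] \\
    &- (|\Ab_\mathrm{int}| - 1)\, \EE[Y \mid a_1, \dots, a_K] ;
\end{align*}
all right-hand-side quantities are identified from $\Prm^{\Mcal}_{(Y,\Ab)}$ and $\{\Prm^{\Mcal(\ddo(A_k))}_{(Y,\Ab)}\}_{k=1}^K$, which gives identifiability. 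Equivalently, the estimator $\hat{f}$ fitted to the regressions~\eqref{eq:obs_fit_sketch} and~\eqref{eq:single_int_fit_sketch} in the proof of \Cref{thm:id} already represents every term of the decomposition, so evaluating it with $R_k = 1$ for $k \in \mathcal{I}$ and $R_k = 0$ for $k \in \mathcal{O}$ returns~\eqref{eq:general_interventional_effect} directly. As a consistency check, the formula specializes to \Cref{thm:id} when $\Ab_\mathrm{int} = \{A_1, \dots, A_K\}$ and to a single-intervention (resp.\ observational) effect when $|\Ab_\mathrm{int}| = 1$ (resp.\ $|\Ab_\mathrm{int}| = 0$).
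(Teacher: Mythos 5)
Your proposal is correct and rests on exactly the same decomposition as the paper's proof: the mixed-regime expectation splits into $\sum_{k\in\mathcal{I}}\EE_{C_k\sim \prm(C_k)}[f_k(a_k,C_k)]+\sum_{k\in\mathcal{O}}\EE_{C_k\sim \prm(C_k\mid a_1,\dots,a_K)}[f_k(a_k,C_k)]$, and your justification of the confounder laws (marginal $\prm(C_k)$ for intervened actions, regime-invariant $\prm(C_k\mid a_1,\dots,a_K)$ for non-intervened ones) is precisely the content of the paper's Lemmas~\ref{lemma:confounder-single-int-equals-conditional} and~\ref{lemma:three-identities}, extended to the mixed regime. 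Where you genuinely diverge is the final step. The paper evaluates the fitted additive estimator $\hat f$ at the indicator configuration $R_k=\mathbf{1}[A_k\in\Ab_\mathrm{int}]$ and telescopes the differences $\hat f_j(\cdot,R_j{=}1)-\hat f_j(\cdot,R_j{=}0)$; you instead collect terms directly into the closed form
\begin{equation*}
    \EE[Y\mid \ddo(\ab_\mathrm{int}),\ab_\mathrm{obs}]
    = \sum_{k\in\mathcal{I}} \EE[Y\mid a_1,\dots,\ddo(a_k),\dots,a_K]
    \;-\; \bigl(|\Ab_\mathrm{int}|-1\bigr)\,\EE[Y\mid a_1,\dots,a_K]\,,
\end{equation*}
which checks out: substituting the single-intervention and observational decompositions, the observational terms of the non-intervened coordinates appear $|\mathcal{I}|$ times in the first sum and are cancelled down to one copy by the second term, while each intervened coordinate's observational term is removed entirely and replaced by its marginal-$C_k$ counterpart. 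This formula is a real addition --- it exhibits identifiability as an explicit linear combination of $K+1$ quantities that are each directly estimable, without reference to any fitted model, and it specializes correctly to \Cref{thm:id} when $\mathcal{I}=\{1,\dots,K\}$. The paper's estimator-based route buys a proof that doubles as the practical algorithm; your closed form buys transparency about exactly which distributions enter and with what weights. Both are valid, and you correctly note their equivalence.
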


\subsection{Additivity with Respect to a Partition}
\Cref{thm:id} implies that for an additive outcome mechanism~\eqref{eq:outcome_mech}, the number of interventional datasets required for identification of the joint effect~\eqref{eq:joint_interventional_effect} grows only linearly with the number of actions.
We can show that even when \eqref{eq:outcome_mech} is only additive with respect to the effect of subsets of actions, identification is possible as long as we have joint interventional data on each subset.
\begin{definition}[Additivity w.r.t.\ a Partition]
    \label{def:partitioned_additivity}
    Let $\mathfrak{B}$ be a partition of the index set $\{1, \dots, K\}$.
    For an SCM of the form discussed in \Cref{ssec:setting}, we call the outcome mechanism~\eqref{eq:outcome_mech} \emph{additive with respect to $\mathfrak{B}$}, if we can write the structural assignments as:
    \begin{align}
        Y &\coloneqq \sum_{B\in \mathfrak{B}} f_B(\Ab_B, \Cb_B) + U  &\\
        A_k &\coloneqq g_k(A_1, \dots, A_{k-1}, \Cb_{B(k)}, V_{k}) \\
        C_k &\coloneqq W_{k} \\
     \text{for} & \ k\in \{1, \dots, K\} \nonumber \,,
    \end{align}
    where $B(k)\in \mathfrak{B}$ is the partition that contains index $k$.
\end{definition}

\begin{corollary}
    \label{cor:partitioned_id}
    Let $\mathfrak{B}$ be a partition of the index set $\{1, \dots, K\}$ such that the ground-truth SCM $\Mcal$ has an outcome mechanism which is additive with respect to $\mathfrak{B}$.
    Then the joint interventional effect~\eqref{eq:joint_interventional_effect} can be identified from observational data
    \begin{equation}
        \mathrm{D}_\mathrm{obs}\sim \mathrm{P}^\Mcal_{(Y, \Ab)}
    \end{equation}
    and $|\mathfrak{B}|$ interventional datasets
    \begin{equation}
        \{\mathrm{D}^B_\mathrm{int}\sim \Prm^{\Mcal(\ddo(\Ab_B))}_{(Y, \Ab)}\}_{B\in \mathfrak{B}} \,.
    \end{equation}
    The proof is given in \Cref{app:additional_results}.
\end{corollary}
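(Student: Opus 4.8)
The plan is to reduce \Cref{cor:partitioned_id} to \Cref{thm:id} by coarsening the model along the partition $\mathfrak{B}$: we treat each block $B\in\mathfrak{B}$ as a single vector-valued super-action $\tilde A_B \coloneqq \Ab_B$ with associated vector-valued super-confounder $\tilde C_B \coloneqq \Cb_B$ and contribution $\tilde f_B \coloneqq f_B$, and we group the exogenous noise accordingly as $\tilde V_B \coloneqq (V_k)_{k\in B}$ and $\tilde W_B \coloneqq (W_k)_{k\in B}$. Under this relabelling the outcome mechanism of \Cref{def:partitioned_additivity} reads $Y \coloneqq \sum_{B\in\mathfrak{B}} \tilde f_B(\tilde A_B,\tilde C_B) + U$, which is exactly the additive form~\eqref{eq:additive_outcome_mech} of \Cref{ass:additive_outcome_mechanism}, now with $|\mathfrak{B}|$ actions in place of $K$. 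Intervening on the super-action $\tilde A_B$ is precisely the block intervention $\ddo(\Ab_B)$, so the data supplied in the statement are an observational dataset together with one single-(super-)action interventional dataset per super-action---the exact inputs required by \Cref{thm:id}.

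The first concrete step is to verify that every hypothesis of \Cref{thm:id} transfers to the coarsened model. By \Cref{def:partitioned_additivity}, a confounder $C_k$ with $k\in B$ appears only in the assignments of the actions $A_{k'}$ with $B(k')=B$ and in the assignment of $Y$; hence each super-confounder $\tilde C_B$ is a direct cause only of $\tilde A_B$ and of $Y$, so there is no confounding between super-actions, and each super-action is a direct cause of $Y$. Since $\{U,V_1,\dots,V_K,W_1,\dots,W_K\}$ are mutually independent, so are the grouped noises $\{U,\tilde V_{B_1},\dots,\tilde V_{B_{|\mathfrak{B}|}},\tilde W_{B_1},\dots,\tilde W_{B_{|\mathfrak{B}|}}\}$. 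Finally, we invoke the analogue of \Cref{ass:intervention_support} for the block regimes, namely that $\mathrm{supp}(\Ab)$ coincides across the observational regime, the full joint intervention $\ddo(A_1,\dots,A_K)$, and each block intervention $\ddo(\Ab_B)$; as in the single-variable case this can be guaranteed by appropriate experimental design.

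The one point that requires care is the causal structure \emph{among} the super-actions. Because \Cref{def:partitioned_additivity} allows $A_k\coloneqq g_k(A_1,\dots,A_{k-1},\Cb_{B(k)},V_k)$ to depend on any earlier action---possibly one lying in a different block---two blocks that interleave in the original topological order induce mutual dependencies, so the block-level dependency graph need not be a DAG in the sense of \Cref{def:SCM}. This, however, is immaterial: the underlying full SCM is still acyclic, and, as noted in the remark following \Cref{thm:id}, the identifiability argument is agnostic to the causal relationships among the actions. Concretely, the proof of \Cref{thm:id} uses only the additive decompositions of $\EE[Y\mid\ddo(a_1,\dots,a_K)]$, of the single-intervention conditionals, and of the observational conditional (the analogues of \eqref{eq:joint_int_decomposition_sketch}--\eqref{eq:obs_decomposition_sketch}), the fact that each confounder influences only its own action and $Y$, and \Cref{ass:intervention_support}; none of these invoke the sub-DAG on the actions. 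Re-running that argument verbatim with $(A_k,C_k,f_k)$ replaced by $(\tilde A_B,\tilde C_B,\tilde f_B)$---and, if one prefers to bypass the informal remark entirely, replacing the per-action estimator functions $\hat f_k$ of the proof sketch by per-block estimator functions $\hat f_B$ (each a function of all of $a_1,\dots,a_K$ and an indicator $R_B$), fit on the observational and block-interventional distributions---yields $\EE[Y\mid\ddo(\tilde a_{B_1},\dots,\tilde a_{B_{|\mathfrak{B}|}})]=\EE[Y\mid\ddo(a_1,\dots,a_K)]$, identified from the stated $1+|\mathfrak{B}|$ distributions.

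I expect the main obstacle to be exactly this last point: cleanly justifying that the (possibly cyclic) coarsened block structure does not invalidate the reduction, and---relatedly---pinning down the precise block-level restatement of \Cref{ass:intervention_support}. Everything else is bookkeeping: once the coarsened model is seen to satisfy \Cref{ass:additive_outcome_mechanism} and the block-support condition, the combinatorics of the decompositions and the fitting/identification steps are identical, line for line, to those in the proof of \Cref{thm:id}.
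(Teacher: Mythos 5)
Your proposal is correct and, once you fall back from the literal coarsening reduction to per-block estimator functions $\hat f_B(a_1,\dots,a_K,R_B)$ fitted on the observational and block-interventional distributions, it coincides with the paper's own proof, which decomposes the three regimes' expectations block-wise and reruns the argument of \Cref{thm:id} with $R_B$ in place of $R_k$. Your explicit treatment of interleaving blocks---the block-level dependency structure need not be a DAG, but the d-separation arguments only require acyclicity of the underlying fine-grained SCM and that each $\Cb_B$ has children only in $\Ab_B\cup\{Y\}$---addresses a point the paper leaves implicit.
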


Hence, we can trade off assumptions of additivity on the outcome mechanism for joint experimentation on actions in two complementary ways.
First, if it is unclear whether the outcome mechanism decomposes per variable within some subsets of actions, we can choose not to make the additivity assumption on these subsets and instead collect joint interventional data on them.
Second, our framework allows for shared confounding among action variables within the same partition subset, provided we collect joint interventional data on that subset.
In other words, while our base assumption requires pair-wise confounding (each confounder $C_k$ affects only action $A_k$ and the outcome), \Cref{cor:partitioned_id} permits more complex confounding structures within partition subsets—for instance, a single confounder could affect multiple actions within the same subset—as long as we have joint interventional data for that subset.
This flexibility makes our method applicable to a broader range of real-world scenarios where strict additivity or simple confounding structures may not hold across all variables, while still maintaining identifiability guarantees through strategic experimental design.

\begin{figure*}[htb]
    \centering
    \includegraphics[width=\textwidth]{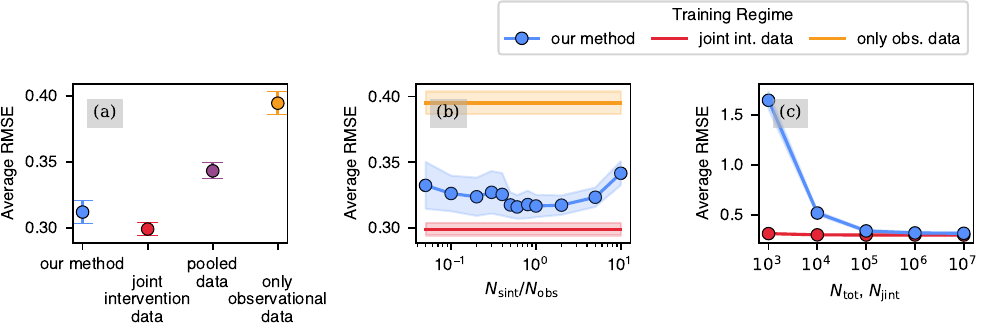}
    \caption{
    \textbf{Experiments on Synthetic Data.}
    \textbf{(a)} Average root mean squared error (RMSE) for predicting the joint interventional effect $\EE[Y \mid \ddo(a_1, \dots, a_5)]$, averaged over 100 experiment runs.
    Each run uses a randomly generated ground truth SCM.
    We compare three approaches:
    (i) Our Intervention Generalization method, training the estimator~\eqref{eq:total_estimator_sketch} on observational and single-intervention data (\Cref{sec:theoretical_results}).
    (ii) An estimator trained directly on joint interventional data (topline).
    (iii) A naive estimator trained on pooled dataset of all observational and single-interventional data.
    (iv) An estimator trained solely on observational data.
    The error bars show the standard error of the mean.
    \textbf{(b)} Prediction error of the joint interventional effect~\eqref{eq:joint_interventional_effect} under varying ratios of observational and single-interventional data for a fixed number of total data points.
    \textbf{(c)} Prediction error of \eqref{eq:joint_interventional_effect} with increasing total number of data points.
    The full experimental details are given in \Cref{app:experiments}.
    }
    \label{fig:experiment_results}
\end{figure*}

\section{Experiments}
\label{sec:experiments}

\subsection{Setting}
\label{ssec:experiments_setting}

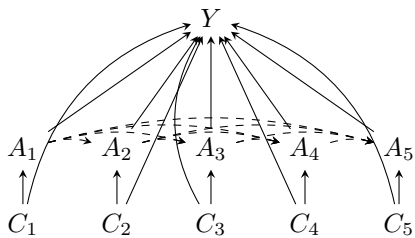
\begin{figure}[htb]
    \centering
    \begin{tikzpicture}[
    node distance = 1cm and 1cm,
    every node/.style = {draw=none, fill=none},
    arrow/.style = {->, >=stealth},
    dashed arrow/.style = {->, >=stealth, dashed}
]

\node (Y) at (0,0) {$Y$};

\foreach \i in {1,...,5} {
    \node (A\i) at ({(\i-3)*1.25},-1.75) {$A_{\i}$};
    \draw[arrow] (A\i) -- (Y);
}

\foreach \i in {1,...,5} {
    \node (C\i) at ({(\i-3)*1.25},-2.75) {$C_{\i}$};
    \draw[arrow] (C\i) -- (A\i);
}
\draw[arrow] (C1) to[bend left] (Y);
\draw[arrow] (C2) to (Y);
\draw[arrow] (C3) to[bend left] (Y);
\draw[arrow] (C4) to (Y);
\draw[arrow] (C5) to[bend right] (Y);

\foreach \i in {1,...,4} {
    \foreach \j in {\i,...,4} {
        \pgfmathtruncatemacro{\nextj}{\j + 1}
        \draw[dashed arrow] (A\i) to[bend left=15] (A\nextj);
    }
}

\end{tikzpicture}
    \caption{
        \textbf{Causal Graph in Synthetic Experiments.} Dashed edges between actions represent probabilistic dependencies that may or may not exist in each sampled SCM.
        }
    \label{fig:experiment_dag}
\end{figure}

\paragraph{Synthetic Data-Generating Process.}
We sample a structural causal model with five actions and confounders and causal relationships as shown in~\Cref{fig:experiment_dag}.
The structural assignments are second order polynomials with randomly sampled coefficients.
The exogenous noises are Gaussian, uniform or logistic.
The corresponding parameters are drawn at random before each experiment run.
The dependencies between actions are probabilistic, with each potential edge having a probability $0.3$ of being active.
We sample 100 SCMs, where for each run we sample $N_\mathrm{obs}$, $N_\mathrm{sint}$ and $N_\mathrm{jint}$ data points for the observational, the single-interventional- and joint-interventional datasets.
We split each dataset into $80\%$ training- and $20\%$ test data.
For evaluation, all models are tested on the joint interventional test dataset.
Further details about the experimental setup are given in \Cref{app:experiments}.

\paragraph{Models and Benchmarks.}
We train third order polynomial estimator functions~\eqref{eq:total_estimator_sketch} as outlined in \Cref{sec:theoretical_results}.
We compare that model to three baselines.
\begin{enumerate}[(i)]
  \item A model that is directly trained on joint interventional data.
  That is, we directly fit $\EE[Y \mid \ddo(a_1, \dots, a_5)]$.
  This comparison represents the minimal error that our method could achieve.
  \item A naive estimator directly trained on the pooled single-interventional and observational data.
  \item A model that only considers the observational data. That is, we use a model fit to $\EE[Y \mid a_1, \dots, a_5]$ to predict $\EE[Y \mid \ddo(a_1, \dots, a_5)]$.
  This is a typical approximation made in the absence of interventional data in real-world applications~\citep{kunz2023deep}.\footnote{The additional error incurred through making this simplifying assumption quantifies the \emph{Causal Risk}~\citep{vankadara2022causal}.}
\end{enumerate}
\par 
The full experimental details and additional results are given in \Cref{app:experiments}.

\subsection{Results}
\label{ssec:results}
The mean root mean squared error (RMSE) over all sampled SCMs in each of the four estimation methods is shown in \Cref{fig:experiment_results}(a).
We observe that our method achieves an error that is close to the minimally achievable error of the topline model that is directly trained on joint interventional data.
Both our approach and the topline benchmark significantly outperform the naive observational-only model and the estimator trained on pooled data.
The results empirically validate the effectiveness of our Intervention Generalization technique in leveraging single-intervention data to predict joint-interventional effects.
\par
A practically relevant question is: \textit{How much single interventional data is necessary to obtain a good estimate of the joint effect~\eqref{eq:joint_interventional_effect}?}
Typically, interventional data is much harder to obtain than observations of the unintervened system.
We empirically test the behavior of the Intervention Generalization approach with varying shares of single-interventional and observational data, shown in \Cref{fig:experiment_results}(b).
That is, the total number of data points $N_\mathrm{tot} = N_\mathrm{obs}+K N_\mathrm{sint}$ is fixed, and the ratio $N_\mathrm{sint} / N_\mathrm{obs}$ is changed.
The precise value of the optimal ratio depends on the overall number of data points and the number of actions in the SCM.
We observe two trends:
\begin{inparaenum}[(i)]
    \item The optimal ratio tends to be between $0.1$ and $0.9$.
    Thus, having more observational data is favorable.
    This aligns well with real-world scenarios where observational data is typically more abundant and easier to collect.
    \item The error curve flattens as $N_\mathrm{tot}$ increases.
\end{inparaenum}
Additional experimental runs for more settings are shown in \Cref{fig:data_ratio_results} in \Cref{app:additional_experiments}.
\par
The theoretical results in \Cref{sec:theoretical_results} establish that the joint interventional effect~\eqref{eq:joint_interventional_effect} can be identified from single-interventional and observational data.
However, from those results we cannot infer the sample efficiency  of the estimation procedure in the proof of \Cref{thm:id}.
\Cref{fig:experiment_results}(c) shows the error as a function of the number of total data points $N_\mathrm{tot}$.
We compare this against an estimator that is directly trained on a dataset of joint interventions of size $N_\mathrm{jint}$.
We find that the Intervention Generalization approach takes over an order of magnitude more data to reach a similar accuracy as the model directly trained on joint interventions.
Hence, the benefit of requiring only single interventions comes at the cost of a decreased sample efficiency and has to be weighed against the difficulty of obtaining joint interventional data.
\Cref{fig:convergence_results} in \Cref{app:additional_experiments} shows the convergence for SCMs with varying numbers of actions $K$.

\section{Discussion and Outlook}
\label{sec:discussion_and_outlook}

We have shown that by constraining the outcome mechanism~\eqref{eq:outcome_mech} to an additive model class, we can successfully identify joint interventional effects from single-interventional and observational data.
Our constructive identifiability proof provides a practical estimator for the joint interventional effect~\eqref{eq:joint_interventional_effect}.
The estimator function decomposes into terms for the confounded and unconfounded contribution of each action to the outcome.
\par 
While additivity offers mathematical tractability, careful consideration is needed before applying this assumption.
In ecological systems, for instance, \citet{brook2008synergies} showed that environmental threats often interact synergistically rather than additively, where the combined impact of multiple stressors exceeded the sum of their individual effects, leading to systematic underestimation of extinction risks when additive models were used.
\par
Our identifiability result based on additivity constraints points to a broader theoretical question: \textit{What is the minimal set of assumptions required for identifiability, and can we find more general causal model classes that still permit identification while relaxing our current constraints?}
Identifying such classes would expand the practical applicability of our approach while maintaining its theoretical guarantees.
\par 
One avenue for generalizing the results of this work is the function class of the outcome mechanism~\eqref{eq:outcome_mech}.
At the most general end of the spectrum is the additive structure used in the Kolmogorov–Arnold representation theorem~\citep{kolmogorov1957representation,kolmogorov1956ontherepresentation}, which can represent any continuous function from a compact set in $\RR^N$ to $\RR$.
However, with such broad representational power, we are unlikely to maintain identifiability.
Moving toward more restricted classes, Generalized Additive Functions~\citep{hastie1990generalized} and Post-Nonlinear Models~\citep{zhang2012identifiability} represent promising intermediate points that could balance expressiveness with structural constraints.
The latter have already demonstrated utility in causal structure learning.
Such intermediate function classes could extend our approach to scenarios where strict additivity may not hold.

\par
The precise confounding structure can be difficult to assess and often there are additional covariates to account for.
Another area for investigation is thus the adaptation of our estimation technique to more complex scenarios.
These include settings with additional non-intervened covariates or under more general confounding structures between action variables.
\par
While identifiability guarantees that the joint effect can be estimated without joint interventional data, we have observed that this estimate can be inefficient in terms of sample complexity compared to directly training on joint interventions.
Therefore, another direction of future work could be to investigate finite sample guarantees of Intervention Generalization.

\section*{Acknowledgements}
We would like to thank Simon Buchholz, Amartya Sanyal and Frederik Träuble for their valuable contributions through interesting and helpful discussions about our approach.

\section*{Software and Data}

The code used for the experiments is available at \href{https://github.com/akekic/intervention-generalization}{github.com/akekic/intervention-generalization}.

\section*{Impact Statement}

This paper presents work whose goal is to advance the field of 
Machine Learning. There are many potential societal consequences 
of our work, none which we feel must be specifically highlighted here.

\newpage
\bibliography{references}
\bibliographystyle{icml2025}

\newpage
\appendix
\onecolumn
\setcounter{figure}{0}
\renewcommand{\thefigure}{\Alph{figure}}
\allowdisplaybreaks

\section{Proof of \Cref{thm:id}}
\label{app:main_proof}

\subsection{A Note on Variable Types}
\label{app:variable_types}

For notational simplicity, we present all proofs using continuous variables and the Lebesgue measure.
However, our results hold more generally for discrete variables or a mix of discrete and continuous variables.
The proofs can be adapted by replacing the Lebesgue measure with an appropriate measure for each variable type---the counting measure for discrete variables and the Lebesgue measure for continuous variables.
All integrals would then be interpreted with respect to these measures, effectively becoming sums for discrete variables.
\par
The remaining assumptions and proof steps carry through with these modifications, as the key properties we rely on—such as the law of total probability and the independence relations in the causal graph—hold under general probability measures.

\subsection{Lemmas}
Before we prove the main proposition, we  introduce two useful lemmas.
\begin{lemma}\label{lemma:confounder-single-int-equals-conditional}
    Let $\Mcal$ be an SCM as defined above. Then,
    \begin{align}
     \prm^{\Mcal(\ddo(a_j))}(c_k\mid a_1,\dots,\ddo(a_j),\dots,a_k) = \prm(c_k\mid a_1,\dots ,a_j,\dots,a_k),\label{eq:lemma-1-statement}    
    \end{align}
    for all $k\in\{2,\dots,K\}$, and $j\in\{1,\dots,k-1\}$.
\end{lemma}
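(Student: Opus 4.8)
The plan is to prove \eqref{eq:lemma-1-statement} by writing out the Markov factorizations of the joint densities entailed by $\Mcal$ and by $\Mcal(\ddo(a_j))$, marginalizing everything down to the variables $(C_k, A_1, \dots, A_k)$ that appear in the statement, and checking that both conditionals collapse to the same closed form. Throughout, $Y$ and its mechanism can be ignored since $Y$ does not occur in the statement; and because $j \le k-1$, the confounder $C_j$ is, under \Cref{ass:additive_outcome_mechanism}, a parent of $A_j$ and of $Y$ only---never of $A_k$ or of any $A_i$ with $i\ne j$---which is the structural feature that makes the identity go through.

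Concretely, I would first record that under \Cref{ass:additive_outcome_mechanism} the observational density of $(\Cb,\Ab)$ factorizes as $\prm(c_{1:K},a_{1:K}) = \prod_{i=1}^K \prm(c_i)\,\prod_{i=1}^K \prm(a_i \mid a_{1:i-1}, c_i)$, since $A_i$ has parents $\{A_1,\dots,A_{i-1},C_i\}$ and $C_i=W_i$ is exogenous. Applying $\ddo(a_j)$ deletes the incoming edges of $A_j$, so the factor $\prm(a_j\mid a_{1:j-1},c_j)$ is replaced by the intervention law $q_j(a_j)$ (a point mass in the value-intervention case, but the argument is the same), and $C_j$ then appears only in its prior $\prm(c_j)$. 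The next step is to marginalize out the downstream variables $A_i,C_i$ for $i>k$: integrating from $i=K$ downward, each $\int \prm(a_i\mid a_{1:i-1},c_i)\,da_i = 1$ and then $\int \prm(c_i)\,dc_i = 1$ (none of these indices equals $j$, as $j<k$), so these variables disappear in both models. Then I would integrate out $C_1,\dots,C_{k-1}$: in the observational model each gives $\int \prm(c_i)\,\prm(a_i\mid a_{1:i-1},c_i)\,dc_i = \prm(a_i\mid a_{1:i-1})$ (using $C_i \perp A_{1:i-1}$), yielding $\prm(c_k,a_{1:k}) = \prm(c_k)\,\prm(a_k\mid a_{1:k-1},c_k)\prod_{i=1}^{k-1}\prm(a_i\mid a_{1:i-1})$, and dividing by $\prm(a_{1:k}) = \prod_{i=1}^{k}\prm(a_i\mid a_{1:i-1})$ gives $\prm(c_k\mid a_{1:k}) = \prm(c_k)\,\prm(a_k\mid a_{1:k-1},c_k)\,/\,\prm(a_k\mid a_{1:k-1})$.

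For the intervened model the only change in this last marginalization is that the $i=j$ integral equals $\int \prm(c_j)\,dc_j = 1$ instead of $\prm(a_j\mid a_{1:j-1})$, and the overall factor $q_j(a_j)$ rides along. Both asymmetric pieces---the substituted ``$1$'' and the factor $q_j(a_j)$---then appear identically in the numerator $\prm^{\Mcal(\ddo(a_j))}(c_k,a_{1:k})$ and the denominator $\prm^{\Mcal(\ddo(a_j))}(a_{1:k})$, so they cancel, and $\prm^{\Mcal(\ddo(a_j))}(c_k\mid a_{1:k})$ collapses to exactly the same ratio, which establishes the claim. I expect the main obstacle to be purely bookkeeping: ordering the marginalizations so each integral is genuinely over a leaf factor (which forces the top-down elimination of the $i>k$ block before touching $C_{1:k-1}$), and verifying that the single structural asymmetry between $\Mcal$ and $\Mcal(\ddo(a_j))$ really does cancel in the conditional. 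A purely graphical d-separation argument in the mutilated graph is tempting, but because $A_k$ is a child of $C_k$ the relevant separation statement involves conditioning on a descendant and requires care with colliders, so I would favor the explicit factorization above.
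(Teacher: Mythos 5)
Your proof is correct, and it reaches exactly the same closed form the paper implicitly relies on, namely $\prm(c_k\mid a_{1:k}) = \prm(c_k)\,\prm(a_k\mid a_{1:k-1},c_k)/\prm(a_k\mid a_{1:k-1})$ in both regimes, but by a different route. The paper localizes the argument: it applies Bayes' rule to $\prm^{\Mcal(\ddo(a_j))}(c_k\mid a_1,\dots,\ddo(a_j),\dots,a_k)$, argues factor by factor that the mechanism of $A_k$ is invariant under $\ddo(a_j)$, that $C_k$ is independent of the preceding actions (a d-separation claim about blocked colliders), and that the denominator reduces to the observational $\prm(a_k\mid a_{1:k-1})$ via a one-dimensional marginalization over $c_k$; a second application of Bayes' rule then reassembles the observational conditional. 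You instead work globally from the truncated factorization of the full joint over $(\Cb,\Ab)$, eliminate the downstream block $i>k$ and the nuisance confounders $C_{1:k-1}$ by explicit integration, and observe that the two asymmetric factors introduced by the intervention (the substituted unit integral over $c_j$ and the intervention density $q_j(a_j)$) cancel between numerator and denominator. Your version trades the paper's brevity for self-containedness: it avoids having to verify the collider-blocking claims that the paper states in passing (e.g.\ that there is no open path between $C_k$ and $A_1,\dots,A_{k-1}$, which does require checking that $A_k$ and $Y$ are unconditioned colliders), at the cost of more bookkeeping in ordering the marginalizations. One small point worth making explicit in your write-up is the positivity of $\prm(a_k\mid a_{1:k-1})$ and of the intervention density at the conditioning values, which both proofs need in order to divide; this is where \Cref{ass:intervention_support} enters.
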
 

\begin{proof}(\Cref{lemma:confounder-single-int-equals-conditional})
Using Bayes' rule on the left-hand side of \Cref{eq:lemma-1-statement} we have,
 \begin{align}
     & \prm^{\Mcal(\ddo(a_j))}(c_k\mid a_1,\dots,\ddo(a_j),\dots,a_k)\\
     &= \underbrace{\prm^{\Mcal(\ddo(a_j))}(a_k\mid a_1,\dots,\ddo(a_j),\dots,a_{k-1},c_k)}_{\text{causal mechanism}}
     \frac{\overbrace{\prm^{\Mcal(\ddo(a_j))}(c_k\mid a_1,\dots,\ddo(a_j),\dots,a_{k-1})}^{\text{no open path between }C_k \text{ and } A_1,\dots,\ddo(A_j),\dots,A_{k-1}}}
     {\prm^{\Mcal(\ddo(a_j))}(a_k\mid a_1,\dots,\ddo(a_j),\dots,a_{k-1})}\\
     &= \prm(a_k\mid a_1,\dots,a_j,\dots,a_{k-1},c_k)
     \frac{\overbrace{\prm^{\Mcal(\ddo(a_j))}(c_k)}^{\text{root node}}}
     {\prm^{\Mcal(\ddo(a_j))}(a_k\mid a_1,\dots,\ddo(a_j),\dots,a_{k-1})} \\
     &= \prm(a_k\mid a_1,\dots,a_j,\dots,a_{k-1},c_k)
     \frac{\prm(c_k)}
     {\prm^{\Mcal(\ddo(a_j))}(a_k\mid a_1,\dots,\ddo(a_j),\dots,a_{k-1})}.\label{eq:lemma-1-eq-of-interest}
 \end{align}
 
 We now focus on the denominator,
 \begin{align}
     &\prm^{\Mcal(\ddo(a_j))}(a_k\mid a_1,\dots,\ddo(a_j),\dots,a_{k-1}) \\
     &= \int \prm^{\Mcal(\ddo(a_j))}(a_k, c_k\mid a_1,\dots,\ddo(a_j),\dots,a_{k-1}) \,\ud c_k\\
     &= \int \prm^{\Mcal(\ddo(a_j))}(a_k\mid a_1,\dots,\ddo(a_j),\dots,a_{k-1}, c_k)\,\prm^{\Mcal(\ddo(a_j))}(c_k\mid a_1,\dots,\ddo(a_j),\dots,a_{k-1})\,\ud c_k\\
     &= \int \prm(a_k\mid a_1,\dots,a_j,\dots,a_{k-1})\,\prm^{\Mcal(\ddo(a_j))}(c_k)\,\ud c_k\\
     &= \int \prm(a_k\mid a_1,\dots,a_j,\dots,a_{k-1})\,\prm(c_k)\,\ud c_k\\
     &= \int \prm(a_k\mid a_1,\dots,a_j,\dots,a_{k-1})\,\prm(c_k\mid a_1,\dots,a_j,\dots,a_{k-1})\,\ud c_k\\
     &= \prm(a_k\mid a_1,\dots,a_j,\dots,a_{k-1}).\label{eq:lemma-1-denominator}
 \end{align}
 Using \Cref{eq:lemma-1-denominator} on \Cref{eq:lemma-1-eq-of-interest}, and using Bayes' rule we obtain,
 \begin{align}
     \prm(a_k\mid a_1,\dots,a_j,\dots,a_{k-1},c_k)
     \frac{\prm(c_k\mid a_1,\dots,a_j,\dots,a_{k-1})}
     {\prm(a_k\mid a_1,\dots,a_j,\dots,a_{k-1})}
     = \prm(c_k\mid a_1,\dots,a_j,\dots,a_{k}),
 \end{align}
as required.
\end{proof}

 \begin{lemma}\label{lemma:three-identities}
     The following identities hold:
     \begin{enumerate}[a)]
         \item $\prm^{\Mcal(\ddo(a_1, \dots, a_K))}(c_1, \dots,c_K\mid \ddo(a_1,\dots,a_K)) =\prod_k \prm(c_k)$.
         \item $\prm^{\Mcal(\ddo(a_j))}(c_1,\dots,c_k\mid a_1,\dots,\ddo(a_j), \dots,a_K)$ $=$ $\prm(c_j)\prod_{k\neq j}\prm(c_k\mid a_1,\dots,a_k)$
         \item $\prm(c_1,\dots,c_k\mid a_1,\dots,a_k) = \prod_k \prm(c_k\mid a_1,\dots,a_k)$
     \end{enumerate}
 \end{lemma}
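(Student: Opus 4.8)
The plan is to read off all three identities from the Markov factorization entailed by the structural assignments of \Cref{ass:additive_outcome_mechanism}, with the pair-wise confounding structure doing the essential work. The one structural fact I will use repeatedly is that $C_k=W_k$ is independent of $(A_1,\dots,A_{k-1})$ --- these actions are measurable functions only of $\{W_1,\dots,W_{k-1},V_1,\dots,V_{k-1}\}$ --- so that $\prm(c_k\mid a_1,\dots,a_{k-1})=\prm(c_k)$, together with the fact that each mechanism $\prm(a_k\mid a_1,\dots,a_{k-1},c_k)$ depends on only the single confounder $c_k$.

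For part (a), in the mutilated model $\Mcal(\ddo(a_1,\dots,a_K))$ every edge into an action is deleted, so the actions are constants and the only remaining random ancestors of $Y$ are the root confounders $C_k=W_k$, which stay mutually independent with marginals unchanged from the observational model; this gives (a) at once.

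For parts (b) and (c) I would argue identically. Write the joint density of all confounders and the non-intervened actions as its Markov factorization, a product of root terms $\prm(c_k)$ and mechanism terms $\prm(a_k\mid a_1,\dots,a_{k-1},c_k)$; an intervention on $A_j$ simply replaces $A_j$'s mechanism term by a point mass at $a_j$ and leaves every other mechanism intact. Conditioning on the realized actions means dividing by their marginal, and the key observation is that the normalizing integral separates across $k$, exactly because each mechanism term involves only $c_k$ among the confounders:
\[
\int\prod_{k}\prm(c_k)\,\prm(a_k\mid a_1,\dots,a_{k-1},c_k)\,\ud c_1\cdots\ud c_K\;=\;\prod_{k}\prm(a_k\mid a_1,\dots,a_{k-1}),
\]
the product running over the non-intervened indices. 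Matching numerator and denominator termwise and invoking Bayes' rule with $\prm(c_k\mid a_1,\dots,a_{k-1})=\prm(c_k)$ collapses each factor to $\prm(c_k\mid a_1,\dots,a_k)$; in case (b) the intervened confounder $C_j$ carries no surviving mechanism term, so its factor is just $\prm(c_j)$, which is precisely the stated form, and case (c) is the same computation with no intervention at all.

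The main obstacle --- and the point where \Cref{ass:additive_outcome_mechanism} is indispensable --- is the separability of that normalizing integral: this step, and hence all three identities, breaks down if a confounder can influence more than one action, which is exactly why \Cref{cor:partitioned_id} must re-introduce joint interventional data whenever confounding is shared within a block. A secondary point requiring care is the bookkeeping in the mutilated graphs: one has to check that conditioning on $\ddo(a_j)$ and on the remaining actions introduces no spurious dependence of $C_j$, which holds because $C_j$'s only child $A_j$ is severed and every remaining path from $C_j$ to the other variables passes through the sink $Y$. An alternative would be to establish each identity directly by d-separation in the appropriate mutilated graph, but the factorization route delivers the product form with less work.
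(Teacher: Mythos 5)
Your proof is correct, and it rests on exactly the two structural facts the paper uses --- that $C_k$ is independent of the preceding actions $(A_1,\dots,A_{k-1})$, and that each action's mechanism involves only its own confounder --- but you package them differently. The paper first proves a single-confounder identity (\Cref{lemma:confounder-single-int-equals-conditional}, a Bayes'-rule computation whose denominator integral is precisely your separable normalizer restricted to one factor), and then obtains the product form in parts (b) and (c) by a d-separation argument showing the confounders are mutually independent given the (partly intervened) actions, applying that lemma factor by factor. You instead write down the Markov factorization of the full joint over confounders and actions, divide by the marginal of the actions, and observe that the normalizing integral splits across $k$; this delivers the product structure and the per-factor Bayes collapse in one step and effectively subsumes \Cref{lemma:confounder-single-int-equals-conditional}. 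Your route is more self-contained and makes more transparent both why pair-wise confounding is indispensable and why shared confounding within a block forces the joint-interventional data of \Cref{cor:partitioned_id}; the paper's route is more modular, since \Cref{lemma:confounder-single-int-equals-conditional} is reused verbatim inside part (b). Part (a) is argued essentially identically in both. One cosmetic remark: your computation naturally produces the factors $\prm(c_k\mid a_1,\dots,a_k)$ as stated in the lemma, whereas the paper's proof of part (b) ends with $\prm(c_k\mid a_1,\dots,a_K)$; these agree because $C_k$ is d-separated from $A_{k+1},\dots,A_K$ given $A_1,\dots,A_k$, but it is worth noting the equivalence explicitly since the main theorem uses the latter form.
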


\begin{proof}(\Cref{lemma:three-identities})\hfill
    \begin{enumerate}[a)]
        \item Given the causal graph, and since all action variables $A_k$ are intervened on, the only way in which we could introduce dependencies between the confounders is through conditioning on the collider $Y$. 
        Hence, $C_i\indep C_j\mid \ddo(A_1,\dots,A_k)$, for all $i,j\in\{1,\dots,K\}$.
        Thus, $\prm(c_1,\dots,c_K\mid \ddo(a_1,\dots,a_k))=\prod_k \prm(c_k\mid \ddo(a_1,\dots,a_K))$. 
        Furthermore, since intervention cuts the dependence to the action variables, and $Y$ is not conditioned on, we have $C_K\indep\ddo(A_1,\dots,A_K)$ for all $K$, giving us the first identity.
        \item We have $C_i\indep C_k\mid A_1,\dots,\ddo(A_j),\dots,A_k$ for all $i,k\in\{1,\dots,K\}$ since the conditioning set blocks all paths between the confounders. Either $A_k$ block the outgoing path from $C_k$ for unintervened actions, or there is no outgoing edge (other than to $Y$) for $C_j$.

        Additionally, we have $C_j\indep A_1,\dots,\ddo(A_j),\dots,A_K$. Hence,
        \begin{align}
            &\prm^{\Mcal(\ddo(a_j))}(c_1,\dots,c_K\mid a_1,\dots,\ddo(a_j),\dots,a_K)\\
            &= \overbrace{\prm^{\Mcal(\ddo(a_j))}(c_j)}^{\text{root node}}\prod_{k\neq j} \prm^{\Mcal(\ddo(a_j))}(c_k\mid a_1,\dots,\ddo(a_j),\dots,a_K)\\
            &= \prm(c_j)\prod_{k\neq j} \prm^{\Mcal(\ddo(a_j))}(c_k\mid a_1,\dots,\ddo(a_j),\dots,a_K)\\
            &= \prm(c_j)\prod_{k\neq j} \prm(c_k\mid a_1,\dots,a_K),
        \end{align}
        where in the last line we use \Cref{lemma:confounder-single-int-equals-conditional}.
        \item Follows from an argument analogous to b).
    \end{enumerate}
\end{proof}

\subsection{Proof of main result}

\begin{proof}(\Cref{thm:id})
    First note that

    a) %
    \begin{align}
        \EE[Y&\mid\ddo(a_1,\dots,a_K)]
        = \int y\,\prm^{\Mcal(\ddo(a_1, \dots, a_K))}(y\mid\ddo(a_1,\dots,a_K))\,\ud y\\
        &= \int\dots\int y\,\prm^{\Mcal(\ddo(a_1, \dots, a_K))}(y\mid\ddo(a_1,\dots,a_K),c_1,\dots, c_K)\,\ud y \nonumber \\
        & \qquad \times \prm^{\Mcal(\ddo(a_1, \dots, a_K))}(c_1,\dots c_K\mid\ddo(a_1,\dots,a_K))\,\ud c_1\dots\,\ud c_K\\
        &= \int\dots\int \EE[Y\mid\ddo(a_1,\dots,a_K),c_1,\dots, c_K] \nonumber \\
        & \qquad \times \prm^{\Mcal(\ddo(a_1, \dots, a_K))}(c_1,\dots c_K\mid\ddo(a_1,\dots,a_K))\,\ud c_1\dots\,\ud c_K\\
        &= \int\dots\int \left(\sum_k f_k(a_k,c_k)\right)\prm^{\Mcal(\ddo(a_1, \dots, a_K))}(c_1,\dots c_K\mid\ddo(a_1,\dots,a_K))\,\ud c_1\dots\,\ud c_K\\
        &\leftstackrel{\mathrm{\Cref{lemma:three-identities} a)}}{=}
        \int\dots\int \left(\sum_k f_k(a_k,c_k)\right)\prod_k \prm(c_k)\,\ud c_1\dots\,\ud c_K\\
        &= \sum_k \int f_k(a_k,c_k)\, \prm(c_k)\,\ud c_k\\
        &= \sum_k\EE_{C_k\sim \prm(C_k)}\left[f_k(a_k,C_k)\right]. \label{eq:joint_int_decomposition}
    \end{align}

    Second,

    b) For every $j\in\{1,\dots,K\}$ we have
    \begin{align}
        \EE[Y&\mid a_1,\dots,\ddo(a_j),\dots,a_K]
        = \int y\,\prm^{\Mcal(\ddo(a_j))}(y\mid a_1,\dots,\ddo(a_j),\dots,a_K)\,\ud y\\
        &= \int\dots\int y\,\prm^{\Mcal(\ddo(a_j))}(y\mid a_1,\dots,\ddo(a_j),\dots,a_K,c_1,\dots, c_K)\,\ud y  \nonumber \\
        & \qquad \times \prm^{\Mcal(\ddo(a_j))}(c_1,\dots c_K\mid a_1,\dots,\ddo(a_j),\dots,a_K)\,\ud c_1\dots\,\ud c_K\\
        &= \int\dots\int y\,\prm(y\mid a_1,\dots,a_j,\dots,a_K,c_1,\dots, c_K)\,\ud y \nonumber \\
        & \qquad \times \prm^{\Mcal(\ddo(a_j))}(c_1,\dots c_K\mid a_1,\dots,\ddo(a_j),\dots,a_K)\,\ud c_1\dots\,\ud c_K\\
        &= \int\dots\int \EE[Y\mid a_1,\dots,a_j,\dots,a_K,c_1,\dots, c_K] \nonumber \\
        & \qquad \times \prm^{\Mcal(\ddo(a_j))}(c_1,\dots c_K\mid a_1,\dots,\ddo(a_j),\dots,a_K)\,\ud c_1\dots\,\ud c_K\\
        &= \int\dots\int \left(\sum_k f_k(a_k,c_k)\right)\,\prm^{\Mcal(\ddo(a_j))}(c_1,\dots c_K\mid a_1,\dots,\ddo(a_j),\dots,a_K)\,\ud c_1\dots\,\ud c_K\\
        &\leftstackrel{\mathrm{\Cref{lemma:three-identities} b)}}{=} \int\dots\int \left(\sum_k f_k(a_k,c_k)\right)\prm(c_j)\prod_{k\neq j} \prm(c_k\mid a_1,\dots,a_K)\,\ud c_1\dots\,\ud c_K\\
        &= \int f_j(a_j,c_j)\prm(c_j)\,\ud c_j +\sum_{k\neq j}\int f_k(a_k,c_k)\prm(c_k\mid a_1,\dots,a_K)\,\ud c_k\\
        &= \EE_{C_j\sim \prm(C_j)}[f_j(a_j,C_j)] +\sum_{k\neq j}\EE_{C_k\sim \prm(C_k\mid a_1,\dots,a_K)}[f_k(a_k,C_k)].
        \label{eq:single_int_decomposition}
    \end{align}

    And finally,

    c)
    \begin{align}
        \EE[Y&\mid a_1,\dots,a_K]
        = \int y\,\prm(y\mid a_1,\dots,a_K)\,\ud y\\
        &= \int\dots\int y\,\prm(y\mid a_1,\dots,a_K,c_1,\dots, c_K)\,\ud y \,\prm(c_1,\dots c_K\mid a_1,\dots,a_K)\,\ud c_1\dots\,\ud c_K\\
        &= \int\dots\int \EE[Y\mid a_1,\dots,a_K,c_1,\dots, c_K] \,\prm(c_1,\dots c_K\mid a_1,\dots,a_K)\,\ud c_1\dots\,\ud c_K\\
        &= \int\dots\int \left(\sum_k f_k(a_k,c_k)\right) \,\prm(c_1,\dots c_K\mid a_1,\dots,a_K)\,\ud c_1\dots\,\ud c_K\\
        &\leftstackrel{\mathrm{\Cref{lemma:three-identities} c)}}{=} \int\dots\int \left(\sum_k f_k(a_k,c_k)\right) \prod_k \prm(c_k\mid a_1,\dots,a_K)\,\ud c_1\dots\,\ud c_K\\
        &= \sum_k \int f_k(a_k,c_k)\,\prm(c_k\mid a_1,\dots,a_K)\,\ud c_k\\
        &= \sum_k \EE_{C_k\sim \prm(C_k\mid a_1,\dots,a_K)}[f_k(a_k,C_k)].
        \label{eq:obs_decomposition}
    \end{align}

    We learn $K$ functions
    \begin{equation}
        \hat{f}_k (a_1, \dots, a_K, R_k), \qquad k\in\{1, \dots, K\}
        \label{eq:estimator_functions}
    \end{equation}
    where $R_k\in\{0, 1\}$ is an indicator for whether $A_k$ was intervened on.
    $R_k$ can be thought of as selecting one of two functions
    \begin{equation}
        \hat{f}_k (a_1, \dots, a_K, R_k) =
        \begin{cases}
            \hat{f}_k^\mathrm{obs} (a_1, \dots, a_K) \quad \text{if } R_k=0 \\
            \hat{f}_k^\mathrm{int} (a_1, \dots, a_K) \quad \text{if } R_k=1
        \end{cases}
        \label{eq:estimator_functions_cases}
    \end{equation}
    where $\hat{f}_k^\mathrm{obs}$ and $\hat{f}_k^\mathrm{int}$ are universal function approximators.
    \par 
    We define 
    \begin{equation}
        \hat{f}(a_1, \dots, a_K, R_1, \dots, R_K) = \sum_{k=1}^K \hat{f}_k (a_1, \dots, a_K, R_k).
        \label{eq:estimator}
    \end{equation}
    Since we are in the infinite data regime and have universal function approximators we can fit $\hat f$ such that
    \begin{align}
        \hat{f}(a_1, \dots, a_K, R_1{=}0, \dots, R_K{=}0) &= \EE[Y\mid a_1, \dots, a_K] \label{eq:obs_fit} \\
        \hat{f}(a_1, \dots, a_K, R_1{=}0, \dots, R_j{=}1, \dots, R_K{=}0) &= \EE[Y\mid a_1, \dots, \ddo(a_j) , \dots, a_K], \quad \text{for } j\in\{1, \dots, K\} \label{eq:single_int_fit}.
    \end{align}
    From the definition of \Cref{eq:estimator}, we have for each $j\in\{1, \dots, K\}$
    \begin{align}
        &\hat{f}(a_1, \dots, a_K, R_1{=}0, \dots, R_j{=}1, \dots, R_K{=}0)
        - \hat{f}(a_1, \dots, a_K, R_1{=}0, \dots, R_j{=}0, \dots, R_K{=}0) \\
        &= \hat{f}_j (a_1, \dots, a_K, R_j{=}1) - \hat{f}_j (a_1, \dots, a_K, R_j{=}0). \label{eq:estimator_delta}
    \end{align}
    After training the estimator we have 
    \begin{align}
        &\hat{f}(a_1, \dots, a_K, R_1{=}0, \dots, R_j{=}1, \dots, R_K{=}0)
        - \hat{f}(a_1, \dots, a_K, R_1{=}0, \dots, R_j{=}0, \dots, R_K{=}0) \\
        &= \EE[Y\mid a_1, \dots, \ddo(a_j) , \dots, a_K] - \EE[Y\mid a_1, \dots, a_K] \\
        &\leftstackrel{\mathrm{\eqref{eq:single_int_decomposition},\eqref{eq:obs_decomposition}}}{=} \EE_{C_j\sim \prm(C_j)}[f_j(a_j,C_j)] - \EE_{C_j\sim \prm(C_j\mid a_1,\dots,a_K)}[f_j(a_j,C_j)] \label{eq:expectation_delta}
    \end{align}
    where in the last step we have plugged in the decomposition of the expectation in the single- intervention~\eqref{eq:single_int_decomposition} and observational~\eqref{eq:obs_decomposition} setting.
    \par 
    Combining the definition of the estimator~\eqref{eq:estimator}, and \Cref{eq:estimator_delta,eq:expectation_delta}, we get an expression for the joint interventional effect:
    \begin{align}
        &\hat{f}(a_1, \dots, a_K, R_1{=}1, \dots, R_K{=}1)
        = \sum_{j=1}^K \hat{f}_j (a_1, \dots, a_K, R_j{=}1) \\
        &= \sum_{j=1}^K
        \left(
            \EE_{C_j\sim \prm(C_j)}[f_j(a_j,C_j)]
            - \EE_{C_j\sim \prm(C_j\mid a_1,\dots,a_K)}[f_j(a_j,C_j)]
            + \hat{f}_j (a_1, \dots, a_K, R_j{=}0)
        \right) \\
        &= \sum_{j=1}^K \EE_{C_j\sim \prm(C_j)}[f_j(a_j,C_j)]
        - \sum_{j=1}^K \EE_{C_j\sim \prm(C_j\mid a_1,\dots,a_K)}[f_j(a_j,C_j)]
        + \sum_{j=1}^K \hat{f}_j (a_1, \dots, a_K, R_j{=}0) \\
        &\leftstackrel{\mathrm{\eqref{eq:joint_int_decomposition},\eqref{eq:obs_decomposition},\eqref{eq:estimator}}}{=}
        \EE[Y\mid \ddo(a_1,\dots,a_K)]
        \underbrace{
        - \EE[Y\mid a_1, \dots, a_K]
        + \hat{f}(a_1, \dots, a_K, R_1{=}0, \dots, R_K{=}0)
        }_{\leftstackrel{\mathrm{\eqref{eq:obs_fit}}}{=} 0} \\
        &= \EE[Y\mid \ddo(a_1,\dots,a_K)] \,.
    \end{align}
\end{proof}

\section{Proof of \Cref{prop:any_interventional_id}}
\label{app:any_interventional_id_proof}

\begin{proof}(\Cref{prop:any_interventional_id})
    As in the proof of \Cref{thm:id}, we can decompose the interventional effect as
    \begin{align}
        \EE[Y&\mid \ddo(\ab_\mathrm{int}), \ab_\mathrm{obs}]
        = \sum_{\substack{j \\ A_j \in \Ab_\mathrm{int}}} \EE_{C_j\sim \prm(C_j)}[f_j(a_j,C_j)]
        +\sum_{\substack{k \\ A_k \in \Ab_\mathrm{obs}}}\EE_{C_k\sim \prm(C_k\mid a_1,\dots,a_K)}[f_k(a_k,C_k)].
        \label{eq:general_int_decomposition}
    \end{align}
    We learn $K$ functions and fit them to the observational and single-interventional expectation (\Crefrange{eq:estimator_functions}{eq:single_int_fit}).

    Let
    \begin{equation}
        R_k = 
        \begin{cases}
            1 \text{ if } A_k \in \Ab_\mathrm{int} \\ 
            0 \text{ if } A_k \in \Ab_\mathrm{obs}
        \end{cases}
        \text{for all } k\in \{1, \dots, K\}.
    \end{equation}
    Then our estimator identifies the interventional effect \eqref{eq:general_interventional_effect}, since
    \begin{align}
        &\hat{f}(a_1, \dots, a_K, R_1, \dots, R_K) \\
        &\leftstackrel{\mathrm{\eqref{eq:estimator_functions_cases},\eqref{eq:estimator}}}{=}
        \sum_{\substack{k \\ A_k \in \Ab_\mathrm{obs}}} \hat{f}_k (a_1, \dots, a_K, R_k{=}0) 
        + \sum_{\substack{j \\ A_j \in \Ab_\mathrm{int}}} \hat{f}_j (a_1, \dots, a_K, R_j{=}1) \\
        &= \sum_{\substack{k \\ A_k \in \Ab_\mathrm{obs}}} \hat{f}_k (a_1, \dots, a_K, R_k{=}0) 
        + \sum_{\substack{l \\ A_l \in \Ab_\mathrm{int}}} \hat{f}_l (a_1, \dots, a_K, R_l{=}0)
        \nonumber \\
        &\qquad- \sum_{\substack{l \\ A_l \in \Ab_\mathrm{int}}} \hat{f}_l (a_1, \dots, a_K, R_l{=}0)
        + \sum_{\substack{j \\ A_j \in \Ab_\mathrm{int}}} \hat{f}_j (a_1, \dots, a_K, R_j{=}1) \\
        &= %
        \underbrace{
            \sum_{k=1}^K \hat{f}_k (a_1, \dots, a_K, R_k{=}0)
        }_{
            \leftstackrel{\mathrm{\eqref{eq:estimator},\eqref{eq:obs_fit}}}{=}
            \EE[Y\mid a_1, \dots, a_K]
        }
        + \sum_{j,\ A_j \in \Ab_\mathrm{int}}
        \underbrace{
        \left(
            \hat{f}_j (a_1, \dots, a_K, R_j{=}1) 
            - \hat{f}_j (a_1, \dots, a_K, R_j{=}0) 
        \right)
        }_{
            \leftstackrel{\mathrm{\eqref{eq:estimator_delta},\eqref{eq:expectation_delta}}}{=}
            \EE_{C_j\sim \prm(C_j)}[f_j(a_j,C_j)]
            - \EE_{C_j\sim \prm(C_j\mid a_1,\dots,a_K)}[f_j(a_j,C_j)]
        } \\
        &\leftstackrel{\mathrm{\eqref{eq:obs_decomposition}}}{=}
        \sum_{k=1}^K \EE_{C_k\sim \prm(C_k\mid a_1,\dots,a_K)}[f_k(a_k,C_k)] \nonumber \\
        &\qquad+ \sum_{j,\ A_j \in \Ab_\mathrm{int}}
        \left(
            \EE_{C_j\sim \prm(C_j)}[f_j(a_j,C_j)]
            - \EE_{C_j\sim \prm(C_j\mid a_1,\dots,a_K)}[f_j(a_j,C_j)]
        \right) \\
        &= \sum_{\substack{j \\ A_j \in \Ab_\mathrm{int}}} \EE_{C_j\sim \prm(C_j)}[f_j(a_j,C_j)]
        +\sum_{\substack{k \\ A_k \in \Ab_\mathrm{obs}}}\EE_{C_k\sim \prm(C_k\mid a_1,\dots,a_K)}[f_k(a_k,C_k)] \\
        &\leftstackrel{\mathrm{\eqref{eq:general_int_decomposition}}}{=}
        \EE[Y\mid \ddo(\ab_\mathrm{int}), \ab_\mathrm{obs}]
    \end{align}
\end{proof}

\section{Proof of \Cref{cor:partitioned_id}}
\label{app:additional_results}

\begin{proof}(\Cref{cor:partitioned_id})
    Similar to the proof of \Cref{thm:id}, we can decompose the outcome expectations in the joint interventional regime and the settings for which we have data as
    \begin{align}
        \EE[Y\mid\ddo(a_1,\dots,a_K)]
        &= \sum_{B\in \mathfrak{B}} \EE_{\Cb_B\sim \prod_{k\in B} \prm(C_k)}\left[f_B(\ab_B,\Cb_B)\right] \\
        \EE[Y\mid \ddo(\ab_B), \ab_{\neg B}]
        &= \EE_{\Cb_B\sim \prod_{k\in B} \prm(C_k)}\left[f_B(\ab_B,\Cb_B)\right] \nonumber \\
        &\quad+\sum_{\tilde B\neq B} \EE_{\Cb_{\tilde{B}}\sim \prod_{k\in \tilde B} \prm(C_k\mid a_1,\dots,a_K)}[f_{\tilde{B}}(\ab_{\tilde{B}},\Cb_{\tilde{B}})] 
        \quad \forall B \in \mathfrak{B} \\
        \EE[Y\mid a_1,\dots,a_K]
        &= \sum_{B\in \mathfrak{B}}
        \EE_{\Cb_B\sim \prod_{k\in \tilde B} \prm(C_k\mid a_1,\dots,a_K)}[f_B(\ab_B,\Cb_B)] \, ,
    \end{align}
    where $\ab_{\neg B}$ denotes all actions that are not in subset $B$.

    We
    define $|\mathfrak{B}|$ estimator functions as
    \begin{equation}
        \sum_{B \in \mathfrak{B}} \hat{f}_B (a_1, \dots, a_K, R_B) \, ,
        \label{eq:partitioned_estimator}
    \end{equation}
    where $R_B\in \{0, 1\}$ indicates whether the actions in the subset $B$ were intervened on.
    Then, following the analogous steps as in \Cref{thm:id}, the joint interventional effect~\eqref{eq:joint_interventional_effect} is identified through
    \begin{equation}
        \sum_{B \in \mathfrak{B}} \hat{f}_B (a_1, \dots, a_K, R_B{=}1) \, .
    \end{equation}
\end{proof}

\section{Probability Distributions for Example \ref{ex:nonident}}
\label{app:example_prob_dist}
\FloatBarrier
\begin{table}[h!]
\centering
\caption{Observational distribution: $\Prm^\Mcal(Y, A_1, A_2) = \Prm^{\widetilde\Mcal}(Y, A_1, A_2)$}
\label{tab:example_obs}
\begin{tabular}{@{}r|cc@{}}
\toprule
$\Prm^\Mcal(Y, A_1, A_2)$ & $Y=0$ & $Y=1$\\
\midrule
$A_1=0$, $A_2=0$ & $(1-p)$ & $0$\\
$A_1=1$, $A_2=0$ & $p(1-p)$ & $0$\\
$A_1=0$, $A_2=1$ & $0$ & $0$\\
$A_1=1$, $A_2=1$ & $p^2(1-p)$ & $p^3$\\
\bottomrule
\end{tabular}
\end{table}

\begin{table}[h!]
\centering
\caption{Single-intervention distribution: $\Prm^{\Mcal(\ddo(A_1))}(Y, A_2) = \Prm^{\widetilde\Mcal(\ddo(A_1))}(Y, A_2)$}
\label{tab:example_single1}
\begin{tabular}{c|c|cc}
\toprule
& $\Prm^{\Mcal(\ddo(A_1))}(Y, A_2)$ & $Y = 0$ & $Y = 1$ \\
\midrule
\multirow{2}{*}{$\ddo(A_1 = 0)$} & $A_2 = 0$ & 1 & 0 \\
& $A_2 = 1$ & 0 & 0 \\
\midrule
\multirow{2}{*}{$\ddo(A_1 = 1)$} & $A_2 = 0$ & $(1-p)$ & 0 \\
& $A_2 = 1$ & 0 & $p$ \\
\bottomrule
\end{tabular}
\end{table}

\begin{table}[h!]
\centering
\caption{Single-intervention distribution: $\Prm^{\Mcal(\ddo(A_2))}(Y, A_1) = \Prm^{\widetilde\Mcal(\ddo(A_2))}(Y, A_1)$}
\label{tab:example_single2}
\begin{tabular}{c|c|cc}
\toprule
& $\Prm^{\Mcal(\ddo(A_2))}(Y, A_1)$ & $Y = 0$ & $Y = 1$ \\
\midrule
\multirow{2}{*}{$\ddo(A_2 = 0)$} & $A_1 = 0$ & $(1-p)$ & 0 \\
& $A_1 = 1$ & $p(1-p)$ & $p^2$ \\
\midrule
\multirow{2}{*}{$\ddo(A_2 = 1)$} & $A_1 = 0$ & $(1-p)$ & 0 \\
& $A_1 = 1$ & $p(1-p)$ & $p^2$ \\
\bottomrule
\end{tabular}
\end{table}

\FloatBarrier

\section{Experiments}
\label{app:experiments}

\paragraph{Sampling SCMs.}
\begin{table}[h!]
\centering
\caption{Standard deviations of the exogenous noise variables in the synthetic experiments.}
\label{tab:noise_stds}
\begin{tabular}{@{}lc@{}}
\toprule
Exogenous noise variable & std \\
\midrule
$U$ & $0.1$\\
$V_{1}, \dots , V_{5}$ & $0.1$ \\
$W_{1}, \dots , W_{5}$ & $0.5$ \\
\bottomrule
\end{tabular}
\end{table}
We sample SCMs of the form
\begin{align}
    Y &\coloneqq \sum_{k=1}^5 f_k(A_k, C_k) + U  \\
    A_k &\coloneqq g_k(C_k, \Ab_{\pa(k)})
    & \mathrm{for}\  k\in {1, \dots, 5} \\
    C_k &\coloneqq W_{k}
    & \mathrm{for}\  k\in {1, \dots, 5}
\end{align}
where the functions $f_k$, $g_k$ are third order polynomials, and $\Ab_{\pa(k)}$ are the actions that are also parents of action $A_k$.
Each potential edge $A_j \to A_k$ with $k>j$ is drawn at random from $\text{Bernoulli}(p_\text{edge})$, where the edge probability $p_\text{edge}$ is set to $0.3$ in our experiments.
The distributions of the exogenous noise variables $\{ U, V_{1}, \dots , V_{5}, W_{1}, \dots , W_{5} \}$ have zero mean are either Gaussian, Uniform or Logistic, which is chosen at random.
The standard deviations are given in \Cref{tab:noise_stds}.

The polynomial functions $f_k$ and $g_k$ are also randomly generated. For each function, we create a multivariate polynomial of second order with mixed terms. The process for generating these functions is as follows:
\begin{enumerate}
    \item We consider all possible combinations of powers for the input variables, up to the second order.
    For a function with $n$ input variables, we consider all non-negative integer power combinations $(p_1, \ldots, p_n)$ such that $\sum_{i=1}^n p_i \leq 2$.
    \item For each of these power combinations, we generate a random coefficient drawn from a normal distribution with mean 0 and a small standard deviation $\sigma$ (in our case, $\sigma = 0.1$).
    \item To ensure that the scales of the variables do not grow exponentially along the topological order, we normalize the coefficients. This is done by dividing each coefficient by the sum of the absolute values of all coefficients.
\end{enumerate}

The resulting polynomial function for each $f_k$ takes the form:
\begin{equation}
f_k(A_k, C_k) = \sum_{i,j} \alpha_{ij} A_k^i C_k^j
\end{equation}
where $i + j \leq 2$ (since we're using second-order polynomials), and $\sum_{i,j} |\alpha_{ij}| = 1$ due to the normalization.
Similarly, each $g_k$ is a polynomial function of $C_k$ and the parent actions $\Ab_{\pa(k)}$, with the same properties of being second-order and having normalized coefficients. For a $g_k$ with $m$ parent actions, the function takes the form:
\begin{equation}
g_k(C_k, A_{p_1}, \ldots, A_{p_m}) = \sum_{i_0,i_1,\ldots,i_m} \alpha_{i_0i_1\ldots i_m} C_k^{i_0} A_{p_1}^{i_1} \cdots A_{p_m}^{i_m}
\end{equation}
where $\sum_{j=0}^m i_j \leq 2$ and $\sum_{i_0,i_1,\ldots,i_m} |\alpha_{i_0i_1\ldots i_m}| = 1$.
\par 
In order to satisfy \Cref{ass:intervention_support} on the support of the interventions, we sample single interventions and joint interventions on the action variables to match the observational distributions. 
That is, we sample intervention values from a normal distribution:
\begin{equation}
    A_k^\text{int} \sim \mathcal{N}(\hat \mu_k, \hat\sigma_k^2)
\end{equation}
where $\hat\mu_k$ and $\hat\sigma_k$ are the empirical mean and standard deviation of $A_k$ in the observational data, respectively. 
For joint interventions, the intervention values are sampled independently for each action variable, following the same distribution as in the single intervention case.

\paragraph{Estimators.}
For each interventional setting
\begin{align}
    &\EE[Y\mid a_1, \dots, a_K]\\
    &\EE[Y\mid a_1, \dots, \ddo(a_j) , \dots, a_K], \quad \text{for } j\in\{1, \dots, K\} \,
\end{align}
we fit a third order polynomial estimator.
Hence, the estimators 
\begin{align}
    &\hat{f}(a_1, \dots, a_K, R_1{=}0, \dots, R_K{=}0) \\
    &\hat{f}(a_1, \dots, a_K, R_1{=}0, \dots, R_j{=}1, \dots, R_K{=}0)
    , \quad \text{for } j\in\{1, \dots, K\} \, ,
\end{align}
are represented by $K+1$ polynomials.
Hence, the estimator functions for the effect of each action $\hat{f}_k$ are represented implicitly and can be recovered by adding and subtracting the corresponding terms.
For example,
\begin{equation}
    \hat{f}_j (a_1, \dots, a_K, R_k{=}0)
    = \hat{f}(a_1, \dots, a_K, R_1{=}0, \dots, R_j{=}1, \dots, R_K{=}0)
    - \hat{f}(a_1, \dots, a_K, R_1{=}0, \dots, R_K{=}0) \, .
\end{equation}
We regularize the estimators using Ridge regression and find the optimal regularization parameter through 3-fold cross validation for each estimator.

\paragraph{Evaluation.}
We evaluate all models on the test dataset of the joint interventional data.
We report average root mean squared errors and the corresponding  standard error of the mean.

\paragraph{Experimental settings.}
\begin{itemize}
    \item In \Cref{fig:experiment_results}(a), we generate $N_\mathrm{obs}=N_\mathrm{sint}=N_\mathrm{obs} = 10^6$ samples for the observational, 5 single interventional and the joint interventional dataset.
    For the observational baseline, we train one estimator on the observational samples.
    For the joint interventional baseline, we train one estimator on the joint interventional samples.
    Then we train one estimator on pooled observational and single-interventional data.
    \item \Cref{fig:experiment_results}(b) shows the prediction error for varying ratios of single-interventional and observational samples.
    The total number of data points is kept constant at $N_\mathrm{tot} = (K+1)\times 10^4$ samples.
    The observational and joint interventional baselines represent estimators trained on $N_\mathrm{tot}$ samples from their respective datasets.
    \item For each point in \Cref{fig:experiment_results}(c) the Intervention Generalization method is trained on $N_\mathrm{tot}=N_\mathrm{obs} + K N_\mathrm{sint}$ data points, where the observational and single-interventional data sets have the same number of samples, that is, $N_\mathrm{obs}=N_\mathrm{sint}$.
    We compare its prediction error for the joint interventional effect to an estimator trained on $N_\mathrm{tot}$ joint interventional samples.
\end{itemize}

\subsection{Additional Results}
\label{app:additional_experiments}
\FloatBarrier
\begin{figure*}[h!]
    \centering
    \includegraphics[width=\textwidth]{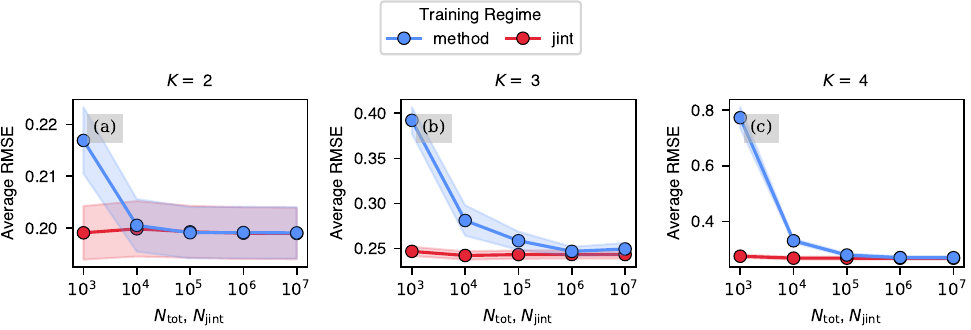}
    \caption{
    \textbf{Convergence of the Intervention Generalization Method.}
    Average root mean squared error (RMSE) for predicting the joint interventional effect $\EE[Y \mid \ddo(a_1, \dots, a_5)]$ for different numbers of total data points $N_\mathrm{tot}$.
    Each data point in the plot is averaged over 100 randomly sampled SCMs.
    The columns show different numbers of action variables $K$.
    }
    \label{fig:convergence_results}
\end{figure*}
\begin{figure*}[h!]
    \centering
    \includegraphics[width=\textwidth]{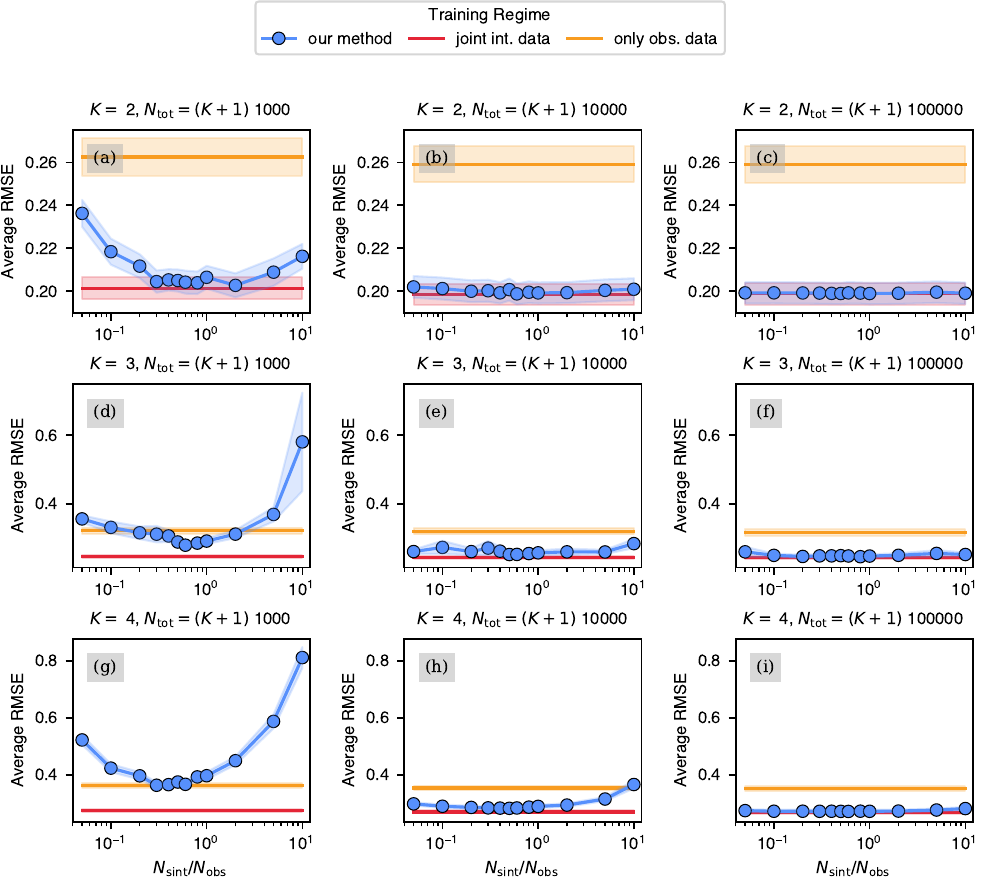}
    \caption{
    \textbf{Experiments with Varying Ratios of Single-Interventional and Observational Data.}
    Average root mean squared error (RMSE) for predicting the joint interventional effect $\EE[Y \mid \ddo(a_1, \dots, a_5)]$ for different ratios $N_\mathrm{sint} / N_\mathrm{obs}$, while keeping the total number of data points $N_\mathrm{tot} = N_\mathrm{obs}+K N_\mathrm{sint}$ constant.
    Each data point in the plot is averaged over 100 randomly sampled SCMs.
    The observational and joint interventional baselines represent estimators trained on $N_\mathrm{tot}$ samples from their respective datasets.
    The columns correspond to different total numbers of data points $N_\mathrm{tot}$, while the rows show different numbers of action variables $K$.
    }
    \label{fig:data_ratio_results}
\end{figure*}

\end{document}